\documentclass{article}

% if you need to pass options to natbib, use, e.g.:
%     \PassOptionsToPackage{numbers, compress}{natbib}
% before loading neurips_2019

% ready for submission
%\usepackage{neurips_2019}

% to compile a preprint version, e.g., for submission to arXiv, add add the
% [preprint] option:
%     \usepackage[preprint]{neurips_2019}

% to compile a camera-ready version, add the [final] option, e.g.:
%     \usepackage[final]{neurips_2019}

% to avoid loading the natbib package, add option nonatbib:
\usepackage[nonatbib, final]{neurips_2019}

\usepackage[utf8]{inputenc} % allow utf-8 input
\usepackage[T1]{fontenc}    % use 8-bit T1 fonts
\usepackage{hyperref}       % hyperlinks
\usepackage{url}            % simple URL typesetting
\usepackage{booktabs}       % professional-quality tables
\usepackage{amsfonts}       % blackboard math symbols
\usepackage{nicefrac}       % compact symbols for 1/2, etc.
\usepackage{microtype}      % microtypography
\usepackage{amsmath}
\usepackage{amssymb}
\usepackage{amsthm}
\usepackage{algorithm}
\usepackage{algorithmic}
\usepackage{color}
\usepackage{wrapfig}
\usepackage{makecell}
\usepackage{lipsum}
\usepackage{marvosym}
\usepackage{varwidth}

\usepackage[numbers]{natbib}
\usepackage{caption} % important for subfigure
\usepackage{subcaption} % important for subfigure
\usepackage{graphicx}
\graphicspath{ {./img/} }
\usepackage{multirow}
\usepackage{array}
\usepackage{booktabs}

\newtheorem{assumption}{Assumption}

\newtheorem{theorem}{Theorem}
\newtheorem*{theorem*}{Theorem}

\newtheorem{lemma}{Lemma}

\newtheorem{corollary}{Corollary}
\newtheorem*{corollary*}{Corollary}

\newtheorem{proposition}{Proposition}

\newtheorem{definition}{Definition}

\title{Imitation Learning from Observations by \\ Minimizing Inverse Dynamics Disagreement}
% The \author macro works with any number of authors. There are two
% commands used to separate the names and addresses of multiple
% authors: \And and \AND.
%
% Using \And between authors leaves it to LaTeX to determine where to
% break the lines. Using \AND forces a line break at that point. So,
% if LaTeX puts 3 of 4 authors names on the first line, and the last
% on the second line, try using \AND instead of \And before the third
% author name.

\author{
  Chao Yang$^{1*}$, Xiaojian Ma$^{12*}$, Wenbing Huang$^1$\thanks{Denotes equal contributions. Corresponding author: Fuchun Sun.} , \\
  \textbf{Fuchun Sun}$^{1}$, \textbf{Huaping Liu}$^{1}$, \textbf{Junzhou Huang}$^3$, \textbf{Chuang Gan}$^4$\\
 $^1$ Beijing National Research Center for Information Science and Technology (BNRist),\\
 State Key Lab on Intelligent Technology and Systems,\\    
 Department of Computer Science and Technology, Tsinghua University\\ 
 $^2$ Center for Vision, Cognition, Learning and Autonomy, Department of Computer Science, UCLA\\ 
 $^3$ Tencent AI Lab \\
 $^4$ MIT-IBM Watson AI Lab \\
  \texttt{yangchao18@mails.tsinghua.edu.cn, maxiaojian@ucla.edu}\\
  \texttt{hwenbing@126.com, fcsun@tsinghua.edu.cn} \\
 }

\begin{document}
\maketitle

\begin{abstract}
This paper studies \emph{Learning from Observations (LfO)} for imitation learning with access to state-only demonstrations. In contrast to \emph{Learning from Demonstration (LfD)} that involves both action and state supervision, LfO is more practical in leveraging previously inapplicable resources (\emph{e.g.} videos), yet more challenging due to the incomplete expert guidance. In this paper, we investigate LfO and its difference with LfD in both theoretical and practical perspectives. We first prove that the gap between LfD and LfO actually lies in the disagreement of inverse dynamics models between the imitator and the expert, if following the modeling approach of GAIL~\cite{GAIL}. More importantly, the upper bound of this gap is revealed by a negative causal entropy which can be minimized in a model-free way. We term our method as \textit{Inverse-Dynamics-Disagreement-Minimization} (IDDM) which enhances the conventional LfO method through further bridging the gap to LfD. Considerable empirical results on challenging benchmarks indicate that our method attains consistent improvements over other LfO counterparts.

\end{abstract}

\section{Introduction}\label{sec:intro}

A crucial aspect of intelligent robots is their ability to perform a task of interest by imitating expert behaviors from raw sensory observations~\citep{lfo}.
Towards this goal, GAIL~\cite{GAIL} is one of the most successful imitation learning methods, which adversarially minimizes the discrepancy of the occupancy measure between the agent and the expert for policy optimization. However, along with many other methods~\citep{lfd1,lfd2,iml_mustcite1,iml_mustcite2,Apprenticeship,IRL,nvidia_bc,duan_one,minitaur,rl_robotic2}, GAIL adopts a heavily supervised training mechanism, which demands not only the expert’s state (\emph{e.g.} observable spatial locations), but also its accurate action (\emph{e.g.} controllable motor commands) performed at each time step. 

Whereas providing expert action indeed enriches the information and hence facilitates the imitation learning process, collecting them could be difficult and sometimes infeasible for some certain practical cases, particularly when we would like to learn skills from a large number of internet videos.
Besides, imitation learning under action guidance is not biologically reasonable~\citep{bco}, as our human can imitate skills through adjusting the action to match the demonstrators' state, without knowing what exact action the demonstrator has performed. To address these concerns, several methods have been proposed~\citep{third,bco,lfo,liu_lfo,sunlfo}, including the one named GAIfO~\cite{gaifo} that extends the idea of GAIL to the case with the absence of action guidance. Joining the previous denotations, this paper will define the original problem as \textit{Learning from Demonstrations (LfD)}, and the new action agnostic setting as \textit{Learning from Observations (LfO)}.

Undoubtedly, conducting LfO is \textit{non-trivial}. For many tasks (\emph{e.g.} robotic manipulation, locomotion and video-game playing), the reward function depends on both action and state. It remains challenging to determine the optimal action corresponding to the best reward purely from experts' state observations, since there could be multiple choices of action corresponding to the same sequence of state in a demonstration---when, for example, manipulating redundant-degree robotic hands, there exist countless force controls of joints giving rise to the same pose change.
Yet, realizing LfO is still possible, especially if the expert and the agent share the same dynamics system (namely, the same robot). In this condition and what this paper has assumed, the correlation between action and state can be learned by the self-playing of the agent (see for example in~\cite{bco}).

In this paper, we approach LfO by leveraging the concept of \emph{inverse dynamics disagreement minimization}. As its name implies, inverse dynamics disagreement is defined as the discrepancy between the inverse dynamics models of the expert and the agent. Minimizing such disagreement becomes the task of inverse dynamics prediction, a well-known problem that has been studied in robotics~\cite{nguyen2011model}. 
Interestingly, as we will draw in this paper, the inverse dynamics disagreement is closely related to LfD and LfO. To be more specific, we prove that the inverse dynamics disagreement actually accounts for the optimization gap between LfD and naive LfO, if we model LfD by using GAIL~\citep{GAIL} and consider naive LfO as GAIfO~\cite{gaifo}. This result is crucial, not only for it tells the quantitative difference between LfD and naive LfO but also for it enables us to solve LfO more elegantly by minimizing the inverse dynamics disagreement as well. 

To mitigate the issue of inverse dynamics disagreement, here we propose a model-free solution for the consideration of efficiency. In detail, we derive an upper bound of the gap, which turns out to be a negative entropy of the state-action occupancy measure. Under the assumption of deterministic system, such entropy contains a mutual information term that can be optimized with the popularly-used tool (\emph{i.e.} MINE~\cite{mine}). For convenience, we term our method as the \textit{Inverse-Dynamics-Disagreement-Minimization} (IDDM) based LfO in what follows. 
To verify the effectiveness of our IDDM, we perform experimental comparisons on seven challenging control tasks, ranging from traditional control to locomotion~\citep{rl_robotic1}. The experimental results demonstrate that our proposed method attains consistent improvements over other LfO counterparts.

The rest of the paper is organized as follows. In \autoref{sec:bg}, we will first review some necessary notations and preliminaries. Then our proposed method will be detailed in \autoref{sec:melfo} with theoretical analysis and efficient implementation, and the discussions with existing LfD and LfO methods will be included in \autoref{sec:discuss}. Finally, experimental evaluations and ablation studies will be demonstrated in \autoref{sec:exp}.

\section{Preliminaries}\label{sec:bg}
\paragraph{Notations.} To model the action decision procedure in our context, we consider a standard Markov decision process (MDP)~\citep{rlbook} as $(\mathcal{S}, \mathcal{A}, r, \mathcal{T}, \mu, \gamma)$, where $\mathcal{S}$ and $\mathcal{A}$ are the sets of feasible state and action, respectively; $r(s,a): \mathcal{S} \times \mathcal{A} \rightarrow \mathbb{R}$ denotes the reward function on state $s$ and action $a$; $\mathcal{T}(s'|s,a): \mathcal{S} \times \mathcal{A} \times \mathcal{S} \rightarrow [0, 1]$ characterizes the dynamics of the environment and defines the transition probability to next-step state $s'$ if the agent takes action $a$ at current state $s$; $\mu(s): S \rightarrow [0, 1]$ is the distribution of initial state and $\gamma \in (0, 1)$ is the discount factor. A stationary policy $\pi(a|s): \mathcal{S} \times \mathcal{A} \rightarrow [0,1]$ defines the probability of choosing action $a$ at state $s$. A temporal sequence of state-action pairs $\{(s_0, a_0), (s_1, a_1), \cdots\}$ is called a trajectory denoted by $\zeta$. \vskip -0.5in
\paragraph{Occupancy measure.} To characterize the statistical properties of an MDP, the concept of occupancy measure~\cite{mdp_book,irl_lp,GAIL,pofd} is proposed to describe the distribution of state and action under a given policy $\pi$. Below, we introduce its simplest form, \emph{i.e.}, \textit{State Occupancy Measure}. 
\begin{definition}[State Occupancy Measure]
Given a stationary policy $\pi$, state occupancy measure $\rho_\pi(s): \mathcal{S} \rightarrow \mathbb{R}$ denotes the discounted state appearance frequency under policy $\pi$
\begin{align}
    \label{equ:rho_s}
    \rho_\pi(s) = \sum_{t=0}^{\infty}\gamma^t P(s_t=s|\pi).
\end{align}
\end{definition}
With the use of state occupancy measure, we can define other kinds of occupancy measures under different supports, including state-action occupancy measure, station transition occupancy measure, and joint occupancy measure. We list their definitions in \autoref{tab:definition} for reader's reference.

\paragraph{Inverse dynamics model.}
We present the \textit{inverse dynamics model}~\cite{inverse_dynamics,robobook} in \autoref{def:inverse-model}, which infers the action inversely given state transition $(s,s')$. 
\begin{table}[htbp]
\renewcommand{\arraystretch}{1.1}
\centering  
\caption{Different occupancy measures for MDP} 
\label{tab:diff_occupancy}
\vspace{3mm}
\setlength\tabcolsep{8pt}
\begin{tabular}{*{4}{c}}
\toprule
  & \makecell{\small{State-Action}\\\small{Occupancy Measure}} & \makecell{\small{State Transition}\\\small{Occupancy Measure}} & \makecell{\small{Joint}\\\small{Occupancy Measure}}\\
\midrule
Denotation &  $\rho_\pi(s,a)$ &  $\rho_\pi(s,s')$ & $\rho_\pi(s,a,s')$ \\
\midrule
Support &   $\mathcal{S}\times\mathcal{A}$ &  $\mathcal{S}\times\mathcal{S}$ &  $\mathcal{S}\times\mathcal{A}\times\mathcal{S}$\\
\midrule
Definition &  $\rho_\pi(s)\pi(a|s)$ &  $\int_{\mathcal{A}}\rho_\pi(s, \overline{a})\mathcal{T}(s'|s, \overline{a})d\overline{a}$ &  $\rho_\pi(s,a)\mathcal{T}(s'|s, a)$\\\bottomrule
\label{tab:definition}
\end{tabular}
\end{table}
\begin{definition}[Inverse Dynamics Model]
Let $\rho_\pi(a|s,s')$ denotes the density function of the inverse dynamics model under the policy $\pi$, whose relation with $\mathcal{T}$ and $\pi$ can be shown as follows.
\label{def:inverse-model}
\allowdisplaybreaks
\begin{align}\label{equ:inversemodel}
    \rho_\pi(a|s, s') := \frac{\mathcal{T}(s'|s, a)\pi(a|s)}{\int_\mathcal{A} \mathcal{T}(s'|s, \bar{a})\pi(\bar{a}|s)d\bar{a}}\text{.}
\end{align}
\end{definition}

\section{Methodology}\label{sec:melfo}
In this section, we first introduce the concepts of LfD, naive LfO, and \textit{inverse dynamics disagreement}. Then, we prove that the optimization gap between LfD and naive LfO actually leads to the inverse dynamics disagreement. As such, we enhance naive LfO by further minimizing the inverse dynamics disagreement. We also demonstrate that such disagreement can be bounded by an entropy term and can be minimized by a model-free method. Finally, we provide a practical implementation for our proposed method.

\subsection{Inverse Dynamics Disagreement: the Gap between LfD and LfO}\label{sec:idd}
\paragraph{LfD.}
In \autoref{sec:intro}, we have mentioned that GAIL and many other LfD methods~\cite{GAIL,mmd_il,pbganirl,VDB} exploit the discrepancy of the occupancy measure between the agent and expert as a reward for policy optimization. Without loss of generality, we will consider GAIL as the representative LfD framework and build our analysis on this description. This LfD framework requires to compute the discrepancy over the state-action occupancy measure, leading to
\begin{align}
\label{equ:lfd_obj}
\min_{\pi} \mathbb{D}_{\operatorname{KL}}\left(\rho_\pi(s,a) || \rho_E(s,a)\right)\text{,}
\end{align}
\vskip -0.1in
where $\rho_E(s,a)$ denotes the occupancy measure under the expert policy, and $\mathbb{D}_{\operatorname{KL}}(\cdot)$ computes the Kullback-Leibler (KL) divergence\footnote{ The original GAIL method applies Jensen-Shannon (JS) divergence rather than KL divergence for measurement. Here, we will use KL divergence for the consistency throughout our derivations. Indeed, our method is also compatible with JS divergence, with the details provided in the supplementary material.}. We have omitted the policy entropy term in GAIL, but our following derivations will find that the policy entropy term is naturally contained in the gap between LfD and LfO.

\paragraph{Naive LfO.} In LfO, the expert action is absent, thus directly working on $\mathbb{D}_{\operatorname{KL}}(\rho_\pi(s,a) || \rho_E(s,a))$ is infeasible. An alternative objective could be minimizing the discrepancy on the state transition occupancy measure $\rho_\pi(s,s')$, as mentioned in GAIfO~\cite{gaifo}. The objective function in~\eqref{equ:lfd_obj} becomes
\begin{align}
    \label{equ:naive_obj}
    \min_{\pi} \mathbb{D}_{\operatorname{KL}}\left(\rho_\pi(s,s') || \rho_E(s,s')\right)\text{.}
\end{align}
\vskip -0.1in
We will refer this as \textit{naive LfO} in the following context. 
Compared to LfD, the key challenge in LfO comes from the absence of action information, which prevents it from applying typical action-involved imitation learning approaches like behavior cloning~\cite{lfd1,lfd2,lfd_bc1,iml_mustcite1,iml_mustcite2} or apprenticeship learning~\cite{IRL,Apprenticeship,irl_lp}. Actually, action information can be implicitly encoded in the state transition $(s,s')$. We have assumed the expert and the agent share the same dynamics system $\mathcal{T}(s'|s, a)$. It is thus possible for us to learn the action-state relation by exploring the difference between their inverse dynamics models. 

We define the inverse dynamics disagreement between the expert and the agent as follows.
\begin{definition}[Inverse Dynamics Disagreement]
\label{def:idd}
Given expert policy $\pi_E$ and agent policy $\pi$, the inverse dynamics disagreement is defined as the KL divergence between the inverse dynamics models of the expert and the agent.
\begin{align}\label{equ:aa_def_inverse}
    \text{Inverse Dynamics Disagreement} := \mathbb{D}_{\operatorname{KL}}\left(\rho_\pi(a|s,s') || \rho_E(a|s,s')\right)\text{.}
\end{align}
\end{definition}
\vskip -0.1in
Given a state transition $(s,s')$, minimizing the inverse dynamics disagreement is learning an optimal policy to fit the expert/ground-truth action labels. This is a typical robotic task~\cite{nguyen2011model}, and it can be solved by using a mixture method of combining machine learning model and control model.

Here, we contend another role of the inverse dynamics disagreement in the context of imitation learning. Joining the denotations in~\eqref{equ:lfd_obj},~\eqref{equ:naive_obj} and \autoref{def:idd}, we provide the following result.
\begin{theorem}
\label{theorem:1}
If the agent and the expert share the same dynamics system, the relation between LfD, naive LfO, and inverse dynamics disagreement can be characterized as
\begin{align}
\begin{split}\label{equ:idd_def_gap}
  \mathbb{D}_{\operatorname{KL}}\left(\rho_\pi(a|s, s') || \rho_E(a|s, s')\right) = \mathbb{D}_{\operatorname{KL}}\left(\rho_\pi(s,a) || \rho_E(s,a)\right) - \mathbb{D}_{\operatorname{KL}}\left(\rho_\pi(s,s') || \rho_E(s,s')\right)\text{.}
\end{split}
\end{align}
\end{theorem}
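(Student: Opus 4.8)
The plan is to expand all four KL divergences using their definitions and the factorizations of the occupancy measures recorded in \autoref{tab:definition}, and then show the terms telescope. The key structural fact is the chain rule for the joint occupancy measure: since $\rho_\pi(s,a,s') = \rho_\pi(s,a)\mathcal{T}(s'|s,a)$ and also $\rho_\pi(s,a,s') = \rho_\pi(s,s')\rho_\pi(a|s,s')$, and likewise $\rho_\pi(s,a,s') = \rho_\pi(s)\pi(a|s)\mathcal{T}(s'|s,a)$. The crucial use of the shared-dynamics assumption is that $\mathcal{T}(s'|s,a)$ is the \emph{same} for agent and expert, so in any ratio $\rho_\pi(s,a,s')/\rho_E(s,a,s')$ the transition factor cancels, giving $\rho_\pi(s,a,s')/\rho_E(s,a,s') = \rho_\pi(s,a)/\rho_E(s,a)$.

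First I would write $\mathbb{D}_{\operatorname{KL}}(\rho_\pi(a|s,s')\,\|\,\rho_E(a|s,s'))$ as an expectation over $\rho_\pi(s,a,s')$ of $\log\frac{\rho_\pi(a|s,s')}{\rho_E(a|s,s')}$ --- that is, I take the conditional KL and average it against the joint $\rho_\pi(s,s')\rho_\pi(a|s,s') = \rho_\pi(s,a,s')$, which is the natural weighting that makes the identity hold with the marginal (not conditional) divergences on the right. Then inside the log I use $\rho_\pi(a|s,s') = \rho_\pi(s,a,s')/\rho_\pi(s,s')$ and likewise for the expert, so the integrand becomes $\log\frac{\rho_\pi(s,a,s')}{\rho_E(s,a,s')} - \log\frac{\rho_\pi(s,s')}{\rho_E(s,s')}$. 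The second piece, integrated against $\rho_\pi(s,a,s')$, collapses to $\mathbb{D}_{\operatorname{KL}}(\rho_\pi(s,s')\,\|\,\rho_E(s,s'))$ because $\log\frac{\rho_\pi(s,s')}{\rho_E(s,s')}$ does not depend on $a$ and $\int_\mathcal{A}\rho_\pi(s,a,s')\,da = \rho_\pi(s,s')$.

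For the first piece I invoke the shared-dynamics cancellation: $\log\frac{\rho_\pi(s,a,s')}{\rho_E(s,a,s')} = \log\frac{\rho_\pi(s,a)\mathcal{T}(s'|s,a)}{\rho_E(s,a)\mathcal{T}(s'|s,a)} = \log\frac{\rho_\pi(s,a)}{\rho_E(s,a)}$. Integrating this against $\rho_\pi(s,a,s')$ and marginalizing out $s'$ (using $\int_\mathcal{S}\rho_\pi(s,a,s')\,ds' = \rho_\pi(s,a)$, which follows since $\mathcal{T}(\cdot|s,a)$ is a probability density in $s'$) yields exactly $\mathbb{D}_{\operatorname{KL}}(\rho_\pi(s,a)\,\|\,\rho_E(s,a))$. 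Combining the two pieces gives \eqref{equ:idd_def_gap}.

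I expect the main obstacle to be a bookkeeping subtlety rather than a deep one: verifying that the ``right'' measure to average the conditional divergence against is $\rho_\pi(s,s')$ (the agent's state-transition occupancy), and confirming that this is consistent with \autoref{def:idd} as the authors intend it --- i.e., that the disagreement is meant as an expected conditional KL under the agent's visitation, not under the expert's. One should also be slightly careful that all the marginalizations are legitimate (the occupancy measures are unnormalized by the $\sum_t \gamma^t$ factor, but the same constant $1/(1-\gamma)$ appears in numerator and denominator of every ratio and does not affect the log-ratios, while it does factor consistently through every integral), and that the support conditions for the KL divergences are compatible so no term is spuriously infinite. Apart from these checks, the argument is a direct chain-rule computation.
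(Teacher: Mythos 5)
Your proposal is correct and follows essentially the same route as the paper: the paper's Lemma 1 is exactly your shared-dynamics cancellation $\rho_\pi(s,a,s')/\rho_E(s,a,s') = \rho_\pi(s,a)/\rho_E(s,a)$, and the paper's main computation is your chain-rule telescoping $\log\frac{\rho_\pi(a|s,s')}{\rho_E(a|s,s')} = \log\frac{\rho_\pi(s,a,s')}{\rho_E(s,a,s')} - \log\frac{\rho_\pi(s,s')}{\rho_E(s,s')}$ integrated against $\rho_\pi(s,a,s')$, just read in the opposite direction (the paper starts from the difference of the two marginal divergences and arrives at the conditional one). Your side remarks --- that the conditional KL must be weighted by the agent's joint occupancy and that the $1/(1-\gamma)$ normalization cancels in every log-ratio --- are both consistent with how the paper implicitly treats these quantities.
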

\vskip -0.1in
\autoref{theorem:1} states that the inverse dynamics disagreement essentially captures the optimization gap between LfD and naive LfO. As~\eqref{equ:aa_def_inverse} is non-negative by nature, optimizing the objective of LfD implies minimizing the objective of LfO but not vice versa.
One interesting observation is that when the action corresponding to a given state transition is unique (or equivalently, the dynamics $\mathcal{T}(s'|s,a)$ is injective \emph{w.r.t} $a$), the inverse dynamics is invariant to different conducted policies, hence the inverse dynamics disagreement between the expert and the agent reduces to zero. We summarize this by the following corollary.
\begin{corollary}
\label{coro:1}
If the dynamics $\mathcal{T}(s'|s,a)$ is injective \emph{w.r.t} $a$, LfD is equivalent to naive LfO.
\begin{align}
    \mathbb{D}_{\operatorname{KL}}\left(\rho_\pi(s,a) || \rho_E(s,a)\right) = \mathbb{D}_{\operatorname{KL}}\left(\rho_\pi(s,s') || \rho_E(s,s')\right)\text{.}
\end{align}
\end{corollary}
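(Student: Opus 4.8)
The plan is to obtain \autoref{coro:1} as an immediate specialization of \autoref{theorem:1}. By the identity \eqref{equ:idd_def_gap}, the difference between the LfD objective $\mathbb{D}_{\operatorname{KL}}(\rho_\pi(s,a)\,\|\,\rho_E(s,a))$ in \eqref{equ:lfd_obj} and the naive-LfO objective $\mathbb{D}_{\operatorname{KL}}(\rho_\pi(s,s')\,\|\,\rho_E(s,s'))$ in \eqref{equ:naive_obj} is precisely the inverse dynamics disagreement $\mathbb{D}_{\operatorname{KL}}(\rho_\pi(a|s,s')\,\|\,\rho_E(a|s,s'))$ of \autoref{def:idd}. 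Hence the corollary is equivalent to the statement that this disagreement vanishes whenever $\mathcal{T}(s'|s,a)$ is injective \emph{w.r.t.} $a$, and I would prove exactly that. The mechanism is \autoref{def:inverse-model}: I will argue that injectivity makes the inverse dynamics model $\rho_\pi(a|s,s')$ \emph{policy-independent}, so that it coincides with $\rho_E(a|s,s')$ and the KL divergence between them is trivially zero.

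For the key step, fix a transition $(s,s')$ in the support of $\rho_\pi(s,s')$, so that the normalizer $\int_{\mathcal{A}}\mathcal{T}(s'|s,\bar a)\pi(\bar a|s)\,d\bar a$ appearing in \eqref{equ:inversemodel} is strictly positive and the conditional density is well defined. Injectivity of $\mathcal{T}(s'|s,\cdot)$ means, as noted just above the corollary, that the action consistent with $(s,s')$ is unique: there is a single $a^\star = a^\star(s,s')$ with $\mathcal{T}(s'|s,a^\star)>0$ and $\mathcal{T}(s'|s,a)=0$ for all $a\neq a^\star$. Substituting this into \eqref{equ:inversemodel}, the numerator $\mathcal{T}(s'|s,a)\pi(a|s)$ is supported only at $a^\star$, and after normalization $\rho_\pi(a|s,s')$ collapses to a point mass at $a^\star$ --- an object in which $\pi$ no longer appears. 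Running the identical computation for the expert yields $\rho_E(a|s,s')$ equal to the same point mass at $a^\star(s,s')$. Therefore $\rho_\pi(a|s,s')=\rho_E(a|s,s')$, so $\mathbb{D}_{\operatorname{KL}}(\rho_\pi(a|s,s')\,\|\,\rho_E(a|s,s'))=0$, and plugging this back into \eqref{equ:idd_def_gap} gives the asserted equality.

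I expect the only real difficulty to be a matter of rigor, not of ideas. One must (i) commit to a precise meaning of ``injective \emph{w.r.t.} $a$'' --- most naturally that $a\mapsto\mathcal{T}(\cdot|s,a)$ is one-to-one, which under the paper's deterministic-system assumption forces $\{a:\mathcal{T}(s'|s,a)>0\}$ to be a singleton; (ii) make the ``collapses to a point mass'' step precise when $\mathcal{T}(s'|s,\cdot)$ is a genuine density rather than an indicator, e.g.\ via a Dirac-type argument or by working directly with the deterministic map $s'=f(s,a)$; and (iii) verify that $\rho_\pi(a|s,s')=\rho_E(a|s,s')$ holds on a large enough set --- the support over which the disagreement integrand in \eqref{equ:aa_def_inverse} is actually evaluated, which by the chain-rule decomposition underlying \autoref{theorem:1} is the support of $\rho_\pi(s,s')$ --- so that the disagreement is genuinely $0$ rather than merely nonnegative. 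None of these obstruct the argument once the deterministic assumption is in force; indeed, the proof is essentially just ``the right-hand inverse-dynamics term in \autoref{theorem:1} is nonnegative in general and exactly zero here.''
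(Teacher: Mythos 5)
Your proposal is correct and follows essentially the same route as the paper: its central step --- that injectivity of $\mathcal{T}(s'|s,\cdot)$ forces the inverse dynamics model $\rho_\pi(a|s,s')$ to collapse to a policy-independent point mass at the unique consistent action, so that $\rho_\pi(a|s,s')=\rho_E(a|s,s')$ and the disagreement term vanishes --- is exactly the paper's Lemma on degenerate dynamics, and concluding via the identity of \autoref{theorem:1} is equivalent to the paper's re-expansion of $\mathbb{D}_{\operatorname{KL}}(\rho_\pi(s,a,s')\,\|\,\rho_E(s,a,s'))$ through the factorization $\rho(s,s')\rho(a|s,s')$. Your added care about supports and the Dirac-mass formalities is a modest improvement in rigor over the paper's informal $\delta(0)$ manipulation, but not a different argument.
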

\vskip -0.1in
However, since most of the real world tasks are performed in rather complex environments,~\eqref{equ:aa_def_inverse} is usually not equal to zero and the gap between LfD and LfO should not be overlooked, which makes minimizing the inverse dynamics disagreement become unavoidable.

\subsection{Bridging the Gap with Entropy Maximization}\label{sec:bridge}
We have shown that the inverse dynamics disagreement amounts to the optimization gap between LfD and naive LfO. Therefore, the key to improving naive LfO mainly lies in \textit{inverse dynamics disagreement minimization}. Nevertheless, accurately computing the disagreement is difficult, as it relies on the environment dynamics $\mathcal{T}$ and the expert policy (see ~\eqref{equ:inversemodel}), both of which are assumed to be unknown. In this section, we try a smarter way and propose an upper bound for the gap, without the access of the dynamics model and expert guidance. This upper bound is tractable to be minimized if assuming the dynamics to be deterministic.
We introduce the upper bound by the following theorem.
\begin{theorem}
\label{theorem:2}
Let $\mathcal{H}_\pi(s, a)$ and $\mathcal{H}_E(s, a)$ denote the causal entropies over the state-action occupancy measure of the agent and expert, respectively. When $\mathbb{D}_{\operatorname{KL}}\left[\rho_\pi(s,s') || \rho_E(s,s')\right]$ is minimized, we have  
\begin{align}
\begin{split}
    \mathbb{D}_{\operatorname{KL}}\left[\rho_\pi(a|s,s') || \rho_E(a|s,s')\right] 
    & \leqslant -\mathcal{H}_\pi(s, a) + \text{Const}\text{.}
\end{split}
\end{align}
\end{theorem}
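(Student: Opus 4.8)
The plan is to read the bound off \autoref{theorem:1} after two elementary reductions, the only non-routine ingredient being a uniform bound on a cross-entropy. First, $\mathbb{D}_{\operatorname{KL}}(\rho_\pi(s,s') || \rho_E(s,s'))\ge 0$ and, by hypothesis, it sits at its minimum, which naive LfO can in principle make exactly $0$ (attained at least at $\pi_E$); so \eqref{equ:idd_def_gap} collapses to $\mathbb{D}_{\operatorname{KL}}(\rho_\pi(a|s,s') || \rho_E(a|s,s'))=\mathbb{D}_{\operatorname{KL}}(\rho_\pi(s,a) || \rho_E(s,a))$. I would also record that a vanishing transition gap forces the state occupancy measures to agree: marginalizing $\rho_\pi(s,s')=\rho_E(s,s')$ over $s'$ and using $\int_{\mathcal S}\mathcal T(s'|s,a)\,ds'=1$ gives $\rho_\pi(s)=\rho_E(s)$. (If one does not want to assume the minimum is attained, data processing, $\mathbb{D}_{\operatorname{KL}}(\rho_\pi(s) || \rho_E(s))\le\mathbb{D}_{\operatorname{KL}}(\rho_\pi(s,s') || \rho_E(s,s'))$, already yields the one-sided inequality we need.)

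Second, I would split off the causal entropy by the chain rule for KL divergence: writing $\rho_\pi(s,a)=\rho_\pi(s)\pi(a|s)$ and $\rho_E(s,a)=\rho_E(s)\pi_E(a|s)$,
\begin{align*}
\mathbb{D}_{\operatorname{KL}}\!\left(\rho_\pi(s,a) || \rho_E(s,a)\right)
= \mathbb{D}_{\operatorname{KL}}\!\left(\rho_\pi(s) || \rho_E(s)\right)
+ \mathbb{E}_{\rho_\pi(s,a)}\!\left[\log\pi(a|s)-\log\pi_E(a|s)\right].
\end{align*}
By the first step the leading term is $0$, and $\mathbb{E}_{\rho_\pi(s,a)}[\log\pi(a|s)]=-\mathcal H_\pi(s,a)$ is precisely minus the $\gamma$-discounted causal entropy of the agent's state-action occupancy measure. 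Hence the inverse dynamics disagreement equals $-\mathcal H_\pi(s,a)-\mathbb{E}_{\rho_\pi(s,a)}[\log\pi_E(a|s)]$, and the whole statement reduces to showing the cross-entropy $-\mathbb{E}_{\rho_\pi(s,a)}[\log\pi_E(a|s)]$ is at most a constant free of $\pi$.

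That cross-entropy bound is the main obstacle, and it is where the structural hypotheses (deterministic dynamics, and the standing regularity that $\pi_E$ is bounded away from zero on the reachable support so that $M:=\sup_{s,a}(-\log\pi_E(a|s))<\infty$) are used. Given such an $M$, and since $\rho_\pi$ always has total mass $\tfrac{1}{1-\gamma}$, one gets $-\mathbb{E}_{\rho_\pi(s,a)}[\log\pi_E(a|s)]\le\tfrac{M}{1-\gamma}$, a quantity independent of $\pi$ and of the same nature as the expert causal entropy $\mathcal H_E(s,a)=-\mathbb{E}_{\rho_E(s,a)}[\log\pi_E(a|s)]\le\tfrac{M}{1-\gamma}$, which is why that quantity is introduced in the statement. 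Taking $\text{Const}:=\tfrac{M}{1-\gamma}$ --- or, for a tighter constant, re-routing $\pi_E(a|s)$ through the expert inverse dynamics model $\rho_E(a|s,s')$ of \autoref{def:inverse-model} (using the deterministic transition) and absorbing the $(s,s')$-measurable pieces, which the matched transition marginal pins to the expert, into the constant --- completes the argument. The care required throughout is to verify that every term swept into $\text{Const}$ is genuinely $\pi$-free; this is exactly what the matched state and state-transition occupancy measures from the first reduction provide, and it is why the claim is stated at the minimizer of the naive-LfO objective rather than unconditionally.
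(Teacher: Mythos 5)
Your route is genuinely different from the paper's and is mostly sound. The paper never invokes the chain rule through $\rho_\pi(s)\pi(a|s)$: it expands the inverse dynamics disagreement directly as $\int\rho_\pi(s,a,s')\log\frac{\rho_\pi(s,a,s')\,\rho_E(s,s')}{\rho_E(s,a,s')\,\rho_\pi(s,s')}$, cancels the shared dynamics via \autoref{lemma:1}, drops the $\log\left(\rho_E(s,s')/\rho_\pi(s,s')\right)$ term using the hypothesis that the transition KL is zero, and then splits $\log\rho_\pi(s,a)-\log\rho_E(s,a)$ into the \emph{joint} entropy $-\mathcal{H}_\pi(s,a)=\mathbb{E}_{\rho_\pi}[\log\rho_\pi(s,a)]$ plus a cross-entropy against $\rho_E(s,a)$, which it bounds by a $\sup_{\rho_\pi}(\cdot)$ --- the same move as your $M/(1-\gamma)$, only less explicit about the regularity (the expert measure bounded away from zero on the reachable support) needed for the constant to be finite. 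Your observations that the matched transition marginal forces $\rho_\pi(s)=\rho_E(s)$, and the data-processing fallback when the minimum is not attained, are correct and are points the paper does not spell out.

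The one concrete slip is the identification $\mathbb{E}_{\rho_\pi(s,a)}[\log\pi(a|s)]=-\mathcal{H}_\pi(s,a)$. By the paper's own definition (see \eqref{equ:entropy_decompose}), $\mathcal{H}_\pi(s,a)=\mathbb{E}_{\rho_\pi(s,a)}[-\log\rho_\pi(s,a)]=\mathcal{H}_\pi(a|s)+\mathcal{H}_\pi(s)$; what your chain-rule decomposition actually produces is only the conditional (policy) entropy $\mathcal{H}_\pi(a|s)$. Your bound is therefore $-\mathcal{H}_\pi(a|s)+\text{Const}$, which differs from the stated one by $\mathcal{H}_\pi(s)$. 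This is repairable precisely because you sit at the minimizer --- there $\mathcal{H}_\pi(s)=\mathcal{H}_E(s)$ is $\pi$-free and can be added and subtracted --- but that step must be made explicit, since away from the constraint surface the two bounds are not equivalent. More importantly for the paper's pipeline, keeping the joint entropy $\mathcal{H}_\pi(s,a)$ live in the bound is exactly what yields the state-entropy/mutual-information term $\mathcal{I}_\pi(s;(s',a))$ in the final objective \eqref{equ:final_obj}; absorbing $\mathcal{H}_\pi(s)$ into the constant, as your derivation implicitly does, proves the inequality but discards the ingredient the theorem exists to supply.
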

\vskip -0.1in
Now we take a closer look at $\mathcal{H}_\pi(s, a)$. Following the definition in ~\autoref{tab:definition}, the entropy of state-action occupancy measure can be decomposed as the sum of the policy entropy and the state entropy by
\begin{align}
\begin{split}
\label{equ:entropy_decompose}
    \mathcal{H}_\pi(s,a) = \mathbb{E}_{\rho_\pi(s, a)}\left[-\log\rho_\pi(s, a)\right] & = \mathbb{E}_{\rho_\pi(s, a)}\left[-\log\pi(a|s)\right] + \mathbb{E}_{\rho_\pi(s)}\left[-\log\rho_\pi(s)\right] \\
    & = \mathcal{H}_\pi(a|s) + \mathcal{H}_\pi(s).         
\end{split}
\end{align}
For the first term, the policy entropy $\mathcal{H}_\pi(a|s)$ can be estimated via sampling similar to previous studies~\citep{GAIL}.
For the second term, we leverage the mutual information (MI) between $s$ and $(s', a)$ to obtain an unbiased estimator of the entropy over the state occupancy measure, namely,
\vskip -0.2in
\begin{align}
\label{equ:MI}
\begin{split}
    \mathcal{H}_\pi(s) &= \mathcal{I}_\pi(s; (s', a)) + \underbrace{\mathcal{H}_\pi(s|s', a)}_{=0} = \mathcal{I}_\pi(s; (s', a))\text{,}
\end{split}
\end{align}
\vskip -0.1in
where we have $\mathcal{H}_\pi(s|s', a)=0$ as we have assumed $(s, a) \rightarrow s'$ is a deterministic function\footnote{In this paper, the tasks in our experiments indeed reveal deterministic dynamics. The mapping $(s,a)\rightarrow s'$  is deterministic also underlying that $(s',a)\rightarrow s$ is deterministic. Referring to~\cite{engan}, when $s$, $s'$, $a$ are continuous, $\mathcal{H}(s|s',a)$ can be negative; but since these variables are actually quantified as finite-bit precision numbers (\emph{e.g.} stored as 32-bit discrete numbers in computer), it is still true that conditional entropy is zero in practice.}. In our implementation, the MI $\mathcal{I}_\pi(s; (s', a))$ is computed via maximizing the lower bound of KL divergence between the product of marginals and the joint distribution following the formulation of~\citep{fgan}. Specifically, we adopt MINE~\citep{mine,dim} which implements the score function with a neural network to achieve a low-variance MI estimator. 

\textbf{Overall loss.} By combining the results in~\autoref{theorem:1}, ~\autoref{theorem:2},~\eqref{equ:entropy_decompose} and ~\eqref{equ:MI}, we enhance naive LfO by further minimizing the upper bound of its gap to LfD. The eventual objective is 
\vskip -0.2in
\begin{align}
    \label{equ:final_obj}
    \mathcal{L}_{\pi} = \mathbb{D}_{\operatorname{KL}}(\rho_\pi(s,s') || \rho_E(s,s'))-\lambda_p\mathcal{H}_{\pi}(a|s)    -\lambda_s\mathcal{I}_{\pi}(s;(s', a)),
\end{align}
\vskip -0.1in
where the first term is from naive LfO, and the last two terms are to minimize the gap between LfD and naive LfO. 
We also add trade-off weights $\lambda_p$ and $\lambda_s$
to the last two terms for more flexibility.
\vskip -0.3in
\paragraph{Implementation.} 
As our above derivations can be generalized to JS-divergence (see Sec. A.3-4 in the supplementary material), we can utilize the GAN-like~\cite{fgan} method to minimize the first term in~\eqref{equ:final_obj}. In detail, we introduce a parameterized discriminator network $D_\phi$ and a policy network $\pi_\theta$ (serves as a generator) to realize the first term in~\eqref{equ:final_obj}. The term $\log D_\phi(s, s')$ could be interpreted as an immediate cost since we minimize its expectation over the current occupancy measure. A similar training method can also be found in GAIL~\citep{GAIL}, but it relies on state-action input instead. We defer the derivations for the gradients of the causal entropy $\nabla\mathcal{H}_\pi(a|s)$ and MI $\nabla\mathcal{I}_\pi(s;(s', a))$ with respect to the policy in Sec. A.5 of the supplementary material. Note that the objective~\eqref{equ:final_obj} can be optimized by any policy gradient method, like A3C~\citep{a3c} or PPO~\citep{PPO}, and we apply PPO in our experiments. 
The algorithm details are summarized in \autoref{alg:main}. 

\begin{algorithm}[t!]
\caption{Inverse-Dynamics-Disagreement-Minimization (IDDM)}
\begin{algorithmic}\label{alg:main}
\REQUIRE State-only expert demonstrations $\mathcal{D}_E=\{\zeta_i^E\}$ where $\zeta_i=\{ s_0^E, s_1^E, ...\}$, policy $\pi_\theta$, discriminator $D_\phi$, MI estimator $\mathcal{I}$, entropy weights $\lambda_p,\lambda_s$, maximum iterations $M$.  
\FOR{$1$ to $M$}
\STATE Sample agent rollouts $\mathcal{D}_A=\{\zeta^i\}$, $\zeta^i \thicksim \pi_{\theta}$ and update the MI estimator $\mathcal{I}$ with $\mathcal{D}_A$.
\STATE Update the discriminator $D_\phi$ with the gradient
\vskip -0.15in
\begin{align*}
    \hat{\mathbb{E}}_{\mathcal{D}_A}\left[\nabla_{\phi}\log D_{\phi}(s,s')\right] + \hat{\mathbb{E}}_{\mathcal{D}_E}\left[\nabla_{\phi}\log(1-D_{\phi}(s,s'))\right]\text{.}
\end{align*}
\vskip -0.15in
\STATE Update policy $\pi_\theta$ using the following gradient (can be integrated into methods like PPO~\cite{PPO})
\vskip -0.15in
\begin{align*}
&\hat{\mathbb{E}}_{\mathcal{D}_A}[\nabla_{\theta}\log \pi_{\theta}(a|s)Q(s,a)]-\lambda_p \nabla_{\theta}\mathcal{H}_{\pi_{\theta}}(a|s)-\lambda_s \nabla_{\theta}\mathcal{I}_{\pi_{\theta}}(s;(s',a))\text{,} \\
&\text{where~~} Q(\bar{s},\bar{a})=\hat{\mathbb{E}}_{\mathbb{D}_A}\left[\log D_{\phi}(s,s')|s_{0}=\bar{s},a_{0}=\bar{a}\right]\text{.}
\end{align*}
\vskip -0.2in
\ENDFOR \\
\end{algorithmic}
\end{algorithm}

\section{Related Work}\label{sec:discuss}

\subsection{Learning from Demonstrations}
Modern dominant approaches on LfD mainly fall into two categories: \textbf{Behavior Cloning (BC)}~\cite{lfd1,lfd2,iml_mustcite1,iml_mustcite2}, which seeks the best policy that can minimize the action prediction error in demonstration, and \textbf{Inverse Reinforcement Learning (IRL)}~\cite{IRL,Apprenticeship}, which infers the reward used by expert to guide the agent policy learning procedure. A notable implementation of the latter is GAIL~\cite{GAIL}, which reformulates IRL as an occupancy measure matching problem~\cite{mdp_book}, and utilizes the GAN~\cite{GAN} method along with a forward RL to minimize the discrepancy of occupancy measures between imitator and demonstrator. There are also several follow-up works that attempt to enhance the effectiveness of discrepancy computation~\cite{mmd_irl,pbganirl,AIRL,VDB}, whereas all these methods require exact action guidance at each time step.

\subsection{Learning from Observations}
There have already been some researches on exploring LfO. These approaches exploit either a complex hand-crafted reward function or an inverse dynamics model that predicts the exact action given state transitions. Here is a summary to show how they are connected to our method.
\paragraph{LfO with Hand-crafted Reward and Forward RL.}
Recently, \citeauthor{deepmimic} propose \textbf{DeepMimic}, a method that can imitate locomotion behaviors from motion clips without action labeling. They design a reward to encourage the agent to directly match the expert's physical proprieties, such as joint angles and velocities, and run a forward RL to learn the imitation policy. However, as the hand-crafted reward function does not take expert action (or implicitly state transition) into account, it is hard to be generalized to tasks whose reward depends on actions. 
\paragraph{Model-Based LfO.}
\textbf{BCO}~\citep{bco} is another LfO approach.
The authors infer the exact action from state transition with a learned inverse dynamics model~\eqref{equ:inversemodel}. The state demonstrations augmented with the predicted actions deliver common state-action pairs that enable imitation learning via BC~\citep{lfd1}. At its heart, the inverse dynamics model is trained in parallel by collecting rollouts in the environment. However, as showed in~\eqref{equ:inversemodel}, the inverse dynamics model depends on the current policy, underlying that an optimal inverse dynamics model would be infeasible to obtain before the optimal policy is learned. The performance of BCO would thus be not theoretically guaranteed.
\paragraph{LfO with GAIL.}
\textbf{GAIfO}~\citep{gaifo,gailfo_new} is the closest work to our method. The authors follow the formulation of GAIL~\citep{GAIL} but replace the state-action definition $(s, a)$ with state transition $(s, s')$, which gives the same objective in Eq.~\eqref{equ:naive_obj} if replacing KL with JS divergence. As we have discussed in \autoref{sec:idd}, there is a gap between Eq.~\eqref{equ:naive_obj} and the objective of original LfD in Eq.~\eqref{equ:lfd_obj}, and this gap is induced by inverse dynamics disagreement. Unlike our method, the solution by GAIfO never minimizes the gap and is thereby no better than ours in principal.

\section{Experiments}\label{sec:exp}
    For the experiments below, we investigate the following questions:
\begin{enumerate}
    \item Does inverse dynamics disagreement really account for the gap between LfD and LfO? 
    \item With state-only guidance, can our method achieve better performance than other counterparts that do not consider inverse dynamics disagreement minimization?
    \item What are the key ingredients of our method that contribute to performance improvement? 
\end{enumerate}
To answer the first question, we conduct toy experiments with the \textit{Gridworld} environment~\cite{rlbook}. We test and contrast the performance of our method (refer to Eq.~\eqref{equ:final_obj}) against GAIL (refer to Eq.~\eqref{equ:lfd_obj}) and GAIfO (refer to Eq.~\eqref{equ:naive_obj}) on the tasks under different levels of inverse dynamics disagreement. Regarding the second question, we evaluate our method against several baselines on six physics-based control benchmarks~\cite{rl_robotic1}, ranging from low-dimension control to challenging high-dimension continuous control. Finally, we explore the ablation analysis of two major components in our method (the policy entropy term and the MI term) to address the last question. Due to the space limit, we defer more detailed specifications of all the evaluated tasks into the supplementary material.

\subsection{Understanding the Effect of Inverse Dynamics Disagreement}
This collection of experiments is mainly to demonstrate how inverse dynamics disagreement influences the LfO approaches. 
We first observe that inverse dynamics disagreement will increase when the number of possible action choices grows. This is justified in \autoref{fig:aa_change}, and more details about the relation between inverse dynamics disagreement and the number of action choices are provided in Sec. B.2 of the supplementary material. Hence, we can utilize different action scales to reflect different levels of inverse dynamics disagreement in our experiments. Controlling the action scale in \textit{Gridworld} is straightforward. For example in \autoref{fig:gridworld_env}, agent block (in red) may take various kinds of actions (walk, jump or others) for moving to a neighbor position towards the target (in green), and we can specify different numbers of action choices.  

We simulate the expert demonstrations by collecting the trajectories of the policy trained by PPO~\cite{PPO}. Then we conduct GAIL, GAIfO, and our method, and evaluate the pre-defined reward values for the policies they learn. It should be noted that all imitation learning methods have no access to the reward function during training. As we can see in \autoref{fig:aa_curve}, the gap between GAIL and GAIfO is growing as the number of action choices (equivalently the level of inverse dynamics disagreement)
increases, which is consistent with our conclusion in~\autoref{theorem:1}. We also find that the rewards of GAIL and GAIfO are the same when the number of action choice is 1 (\emph{i.e.} the dynamics is injective), which follows the statement in~\autoref{coro:1}. Our method lies between GAIL and GAIfO, indicating that the gap between GAIL and GAIfO can be somehow mitigated by explicitly minimizing inverse dynamics disagreement. Note that, GAIL also encounters performance drop when inverse dynamics disagreement becomes large. This is mainly because the imitation learning problem itself also becomes more difficult when the dynamics is complicated and beyond injective.

\begin{figure}[!htb]
    \begin{subfigure}{0.3\textwidth}
        \includegraphics[width=0.9\linewidth]{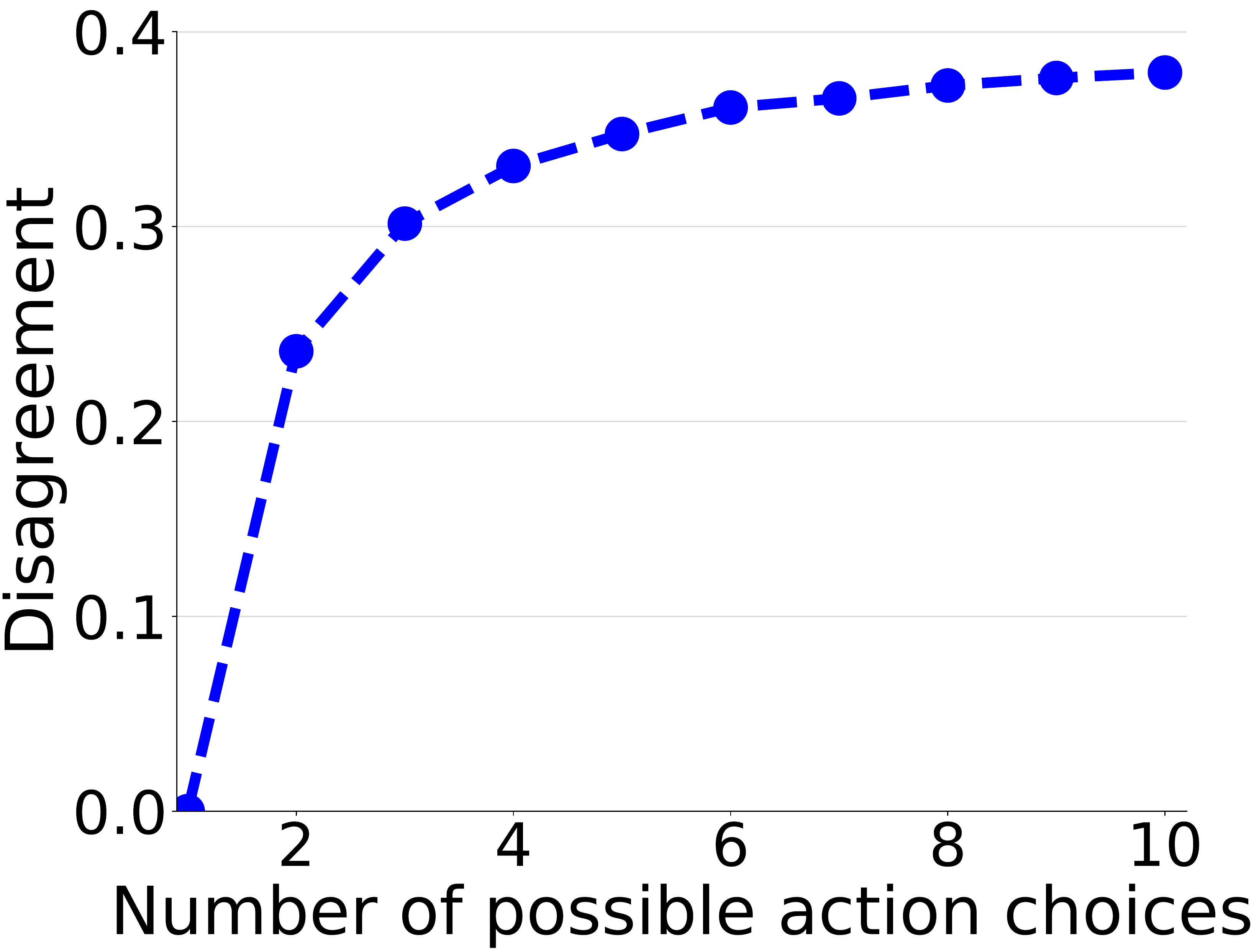}
        \caption{}
        \label{fig:aa_change}
    \end{subfigure}
    \begin{subfigure}{0.38\textwidth}
        \includegraphics[width=0.9\linewidth]{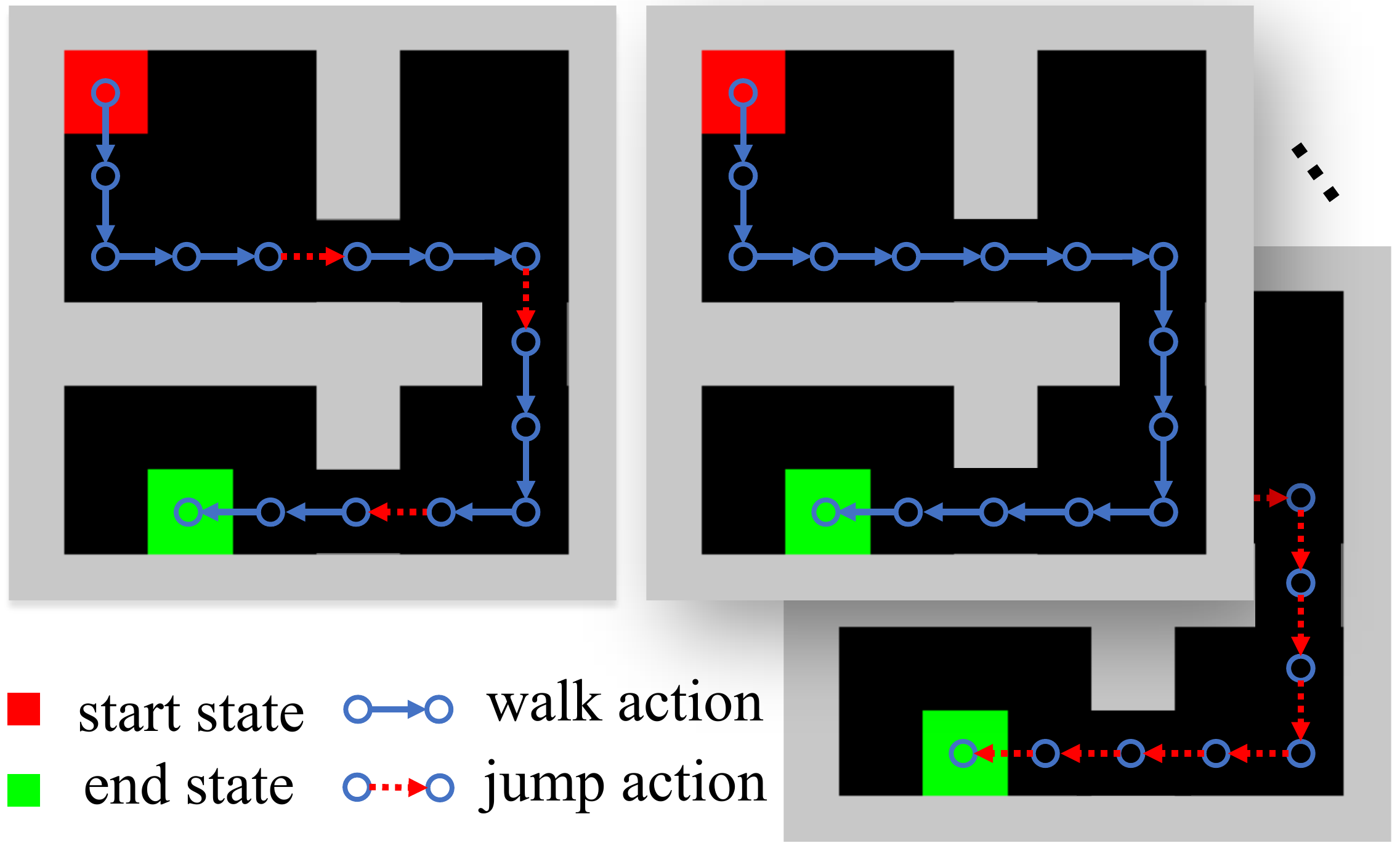} 
        \caption{}
        \label{fig:gridworld_env}
    \end{subfigure}
    \begin{subfigure}{0.3\textwidth}
        \includegraphics[width=0.9\linewidth]{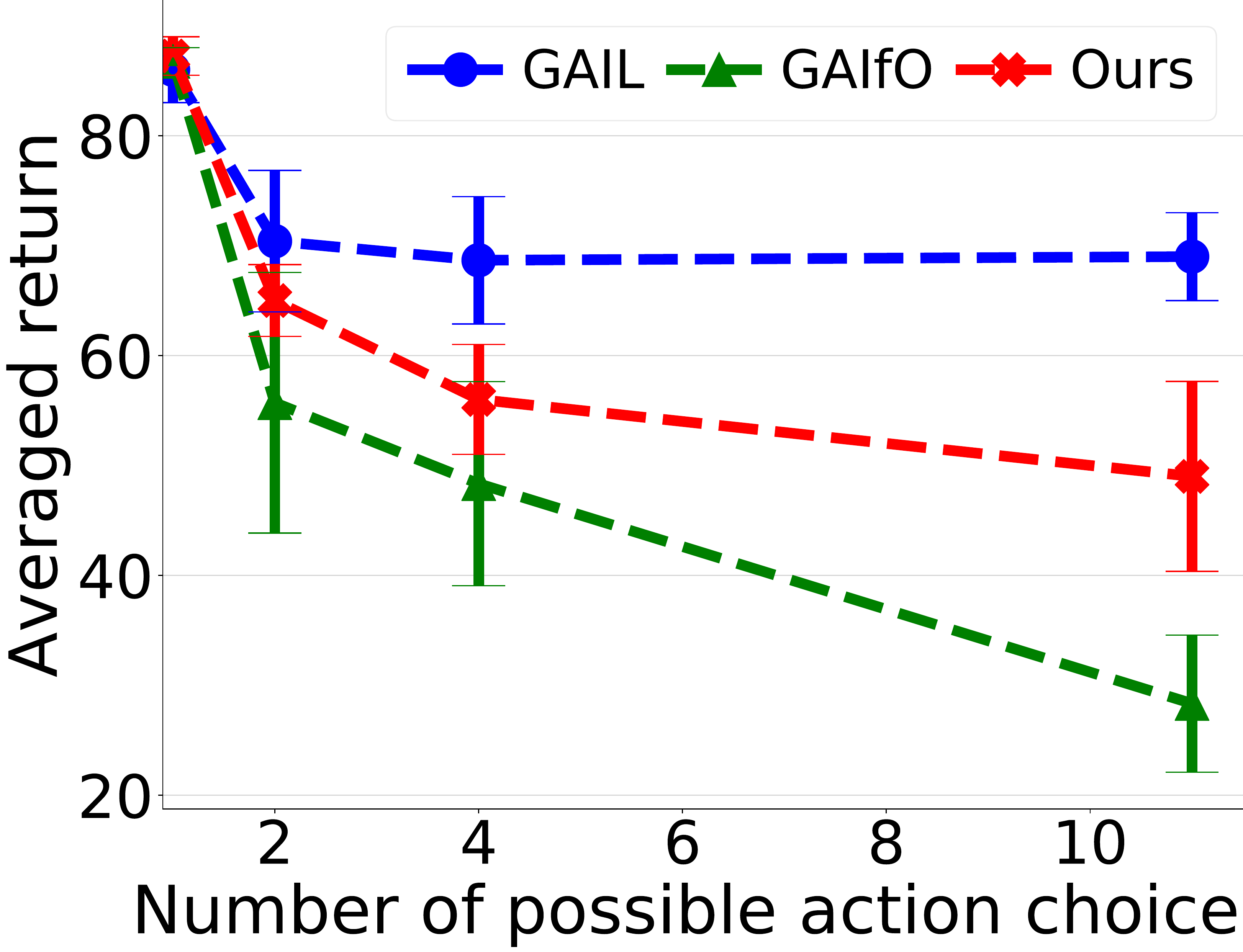}
        \caption{}
        \label{fig:aa_curve}
    \end{subfigure}
    \caption{Toy examples on illustrating the effect of inverse dynamics disagreement.}
    \label{fig:gridworld}
    \vskip -0.15in
\end{figure}

\subsection{Comparative Evaluations}
For comparative evaluations, we carry out several LfO baselines, including DeepMimic~\cite{deepmimic}, BCO~\cite{bco}, and GAIfO~\cite{gaifo}. In particular, we introduce a modified version of GAIfO that only takes a single state as input to illustrate the necessity of leveraging state transition; we denote this method as GAIfO-s. We also run GAIL~\cite{GAIL} to provide oracle reference. All experiments are evaluated within fixed steps. On each task, we run each algorithm over five times with different random seeds. In \autoref{fig:va1}, the solid curves correspond to the mean returns, and the shaded regions represent the variance over the five runs. The eventual results are summarized in \autoref{tab:eval_reward}, which is averaged over 50 trials of the learned policies. Due to the space limit, we defer more details to the supplementary material.
\begin{table*}[htbp]
\caption{\label{tab:eval_reward} 
Summary of quantitative results. All results correspond to the original exact reward defined in~\cite{gym}. \textit{CartPole} is excluded from DeepMimic because no crafting reward is available.}
\begin{center}
\setlength\tabcolsep{1.2pt}
\begin{tabular}{ccccccc}
\toprule 
 & \small{CartPole} & \small{Pendulum}  & \small{DoublePendulum} & \small{Hopper} & \small{HalfCheetah} &  \small{Ant} \\
\midrule 
\small{DeepMimic} & - &731.0$\pm$19.0 & 454.4$\pm$154.0 &  2292.6$\pm$1068.9& 202.6$\pm$4.4 & -985.3$\pm$13.6  \\
\small{BCO} & 200.0$\pm$0.0 & 24.9$\pm$0.8 &  80.3$\pm$13.1  & 1266.2$\pm$1062.8 & 4557.2$\pm$90.0 &  562.5$\pm$384.1 \\ 
\small{GAIfO} & 197.5$\pm$7.3  & 980.2$\pm$3.0 & 4240.6$\pm$4525.6 & 1021.4$\pm$0.6&  3955.1$\pm$22.1 & -1415.0$\pm$161.1 \\ 
\small{GAIfO-s}$^*$ & 200.0$\pm$0.0 & 952.1$\pm$23.0& 1089.2$\pm$51.4 & 1022.5$\pm$0.40 &  2896.5$\pm$53.8 & -5062.3$\pm$56.9  \\
\small{Ours} & \textbf{200.0$\pm$0.0}  & \textbf{1000.0$\pm$0.0} & \textbf{9359.7$\pm$0.2}  & \textbf{3300.9$\pm$52.1}& \textbf{5699.3$\pm$51.8} & \textbf{2800.4$\pm$14.0}  \\
\midrule 
\small{GAIL} & 200.0$\pm$0.0 & 1000.0$\pm$0.0 & 9174.8$\pm$1292.5 & 3249.9$\pm$34.0& 6279.0$\pm$56.5 & 5508.8$\pm$791.5  \\ 
\small{Expert} & 200.0$\pm$0.0 & 1000.0$\pm$0.0 & 9318.8$\pm$8.5 & 3645.7$\pm$181.8 & 5988.7$\pm$61.8 & 5746.8$\pm$117.5  \\ 
\bottomrule 
\end{tabular}
\end{center}
$^*$\small{GAIfO with single state only.}
\end{table*}

\begin{figure*}[h]
\begin{center}
\includegraphics[width=\linewidth]{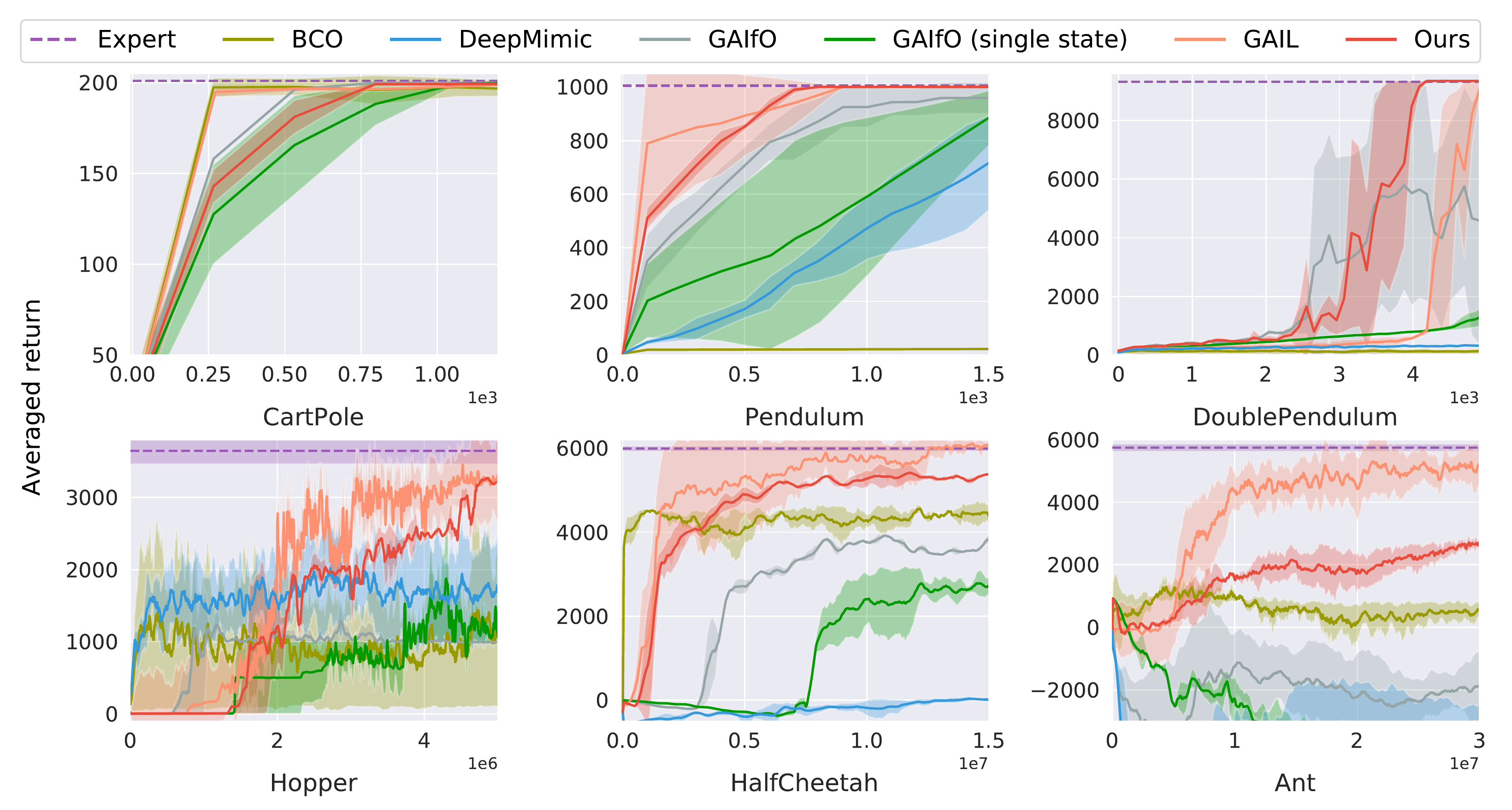}
\caption{Learning curves under challenging robotic control benchmarks. For each experiment, a step represents one interaction with the environment. Detailed plots can be found in the supplementary.}
\label{fig:va1}
\end{center}
\vskip -0.3in
\end{figure*}

The results read that our method achieves comparable performances with the baselines on the easy tasks (such as \textit{CartPole}) and outperforms them by a large margin on the difficult tasks (such as \textit{Ant}, \textit{Hopper}). We also find that our algorithm exhibits more stable behaviors. For example, the performance of BCO on \textit{Ant} and \textit{Hopper} will unexpectedly drop down as the training continues. We conjecture that BCO explicitly but not accurately learns the inverse dynamics model from data, which yet is prone to over-fitting and leads to performance degradation. 
Conversely, our algorithm is model-free and guarantees the training stability as well as the eventual performance, even for the complex tasks including \textit{HalfCheetah}, \textit{Ant} and \textit{Hopper}.

Besides, GAIfO performs better than GAIfO-s in most of the evaluated tasks. This illustrates the importance of taking state-transition into account to reflect action information in LfO. Compared with GAIfO, our method clearly attains consistent and significant improvements on \textit{HalfCheetah} ($+1744.2$), \textit{Ant} ($+4215.0$) and \textit{Hopper} (+$2279.5$), thus convincingly verifying that minimizing the optimization gap induced by inverse dynamics disagreement plays an essential role in LfO, and our proposed approach can effectively bridge the gap. For the tasks that have relatively simple dynamics (\emph{e.g.} \textit{CartPole}), GAIfO achieves satisfying performances, which is consistent with our conclusion in \autoref{coro:1}.

DeepMimic that relies on hand-crafted reward struggles on most of the evaluated tasks. Our proposed method does not depend on any manually-designed reward signal, thus it becomes more self-contained and more practical in general applications.

\begin{figure*}[h]
\vskip -0.15in
\begin{center}
\includegraphics[width=\linewidth]{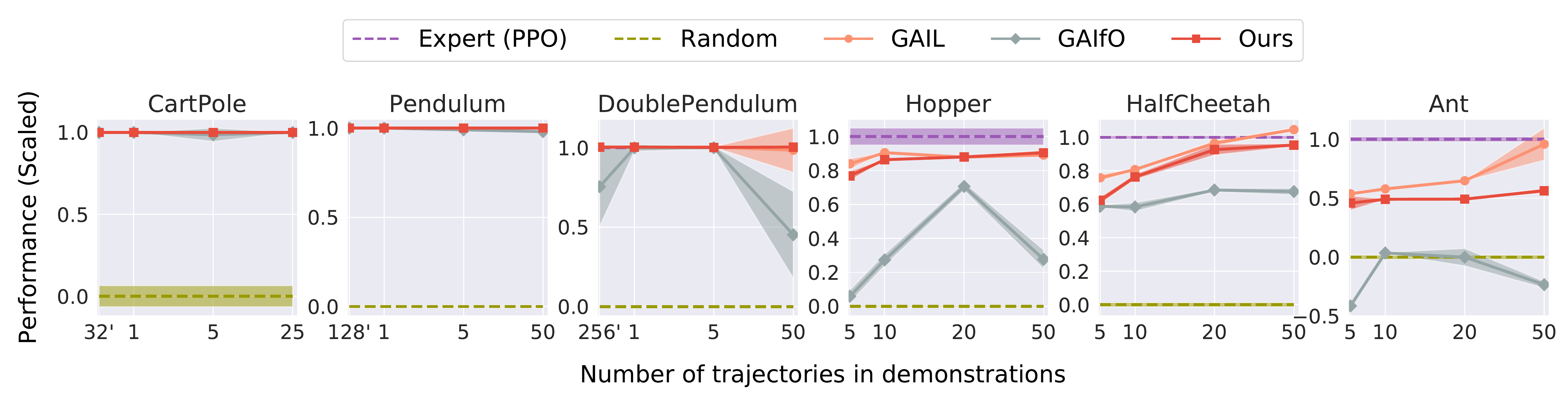}
\caption{Comparative results of GAIL~\cite{GAIL}, GAIfO~\citep{gaifo} and our method with different number of trajectories in demonstrations. The performance is the averaged cumulative return over 5 trajectories and has been scaled within [0, 1] (the random and the expert policies are fixed to be 0 and 1, respectively). We also conduct experiments with demonstrations containing state-action/state-transition pairs with the number less than that within one complete trajectory. We use $32'$, $128'$ and $256'$ pairs (denoted in the beginning of the x axes) for the first three tasks, respectively.}
\label{fig:va2}
\end{center}
\vskip -0.15in
\end{figure*}

Finally, we compare the performances of GAIL, GAIfO and our method with different numbers of demonstrations. The results are presented in~\autoref{fig:va2}. It reads that for simple tasks like \emph{CartPole} and \emph{Pendulum}, there are no significant differences for all evaluated approaches, when the number of demonstrations changes. While for the tasks with a higher dimension of state and action, our method performs advantageously over GAIfO. Even compared with GAIL that involves action demonstrations, our method still delivers comparable results. For all methods, more demonstrations facilitate better performances especially when the tasks become more complicated (\emph{HalfCheetah} and \emph{Ant}).

\subsection{Ablation Study}
The results presented in the previous section suggest that our proposed method can outperform other LfO approaches on several challenging tasks. Now we further perform a diverse set of analyses on assessing the impact of the policy entropy term and the MI term in ~\eqref{equ:final_obj}. As these two terms are controlled by $\lambda_p$, $\lambda_s$, we will explore the sensitivity of our algorithm in terms of their values.

\paragraph{Sensitivity to Policy Entropy.}
\begin{wrapfigure}{r}{0.35\textwidth}
\vskip -0.2 in
     \centering
      \includegraphics[width=\linewidth]{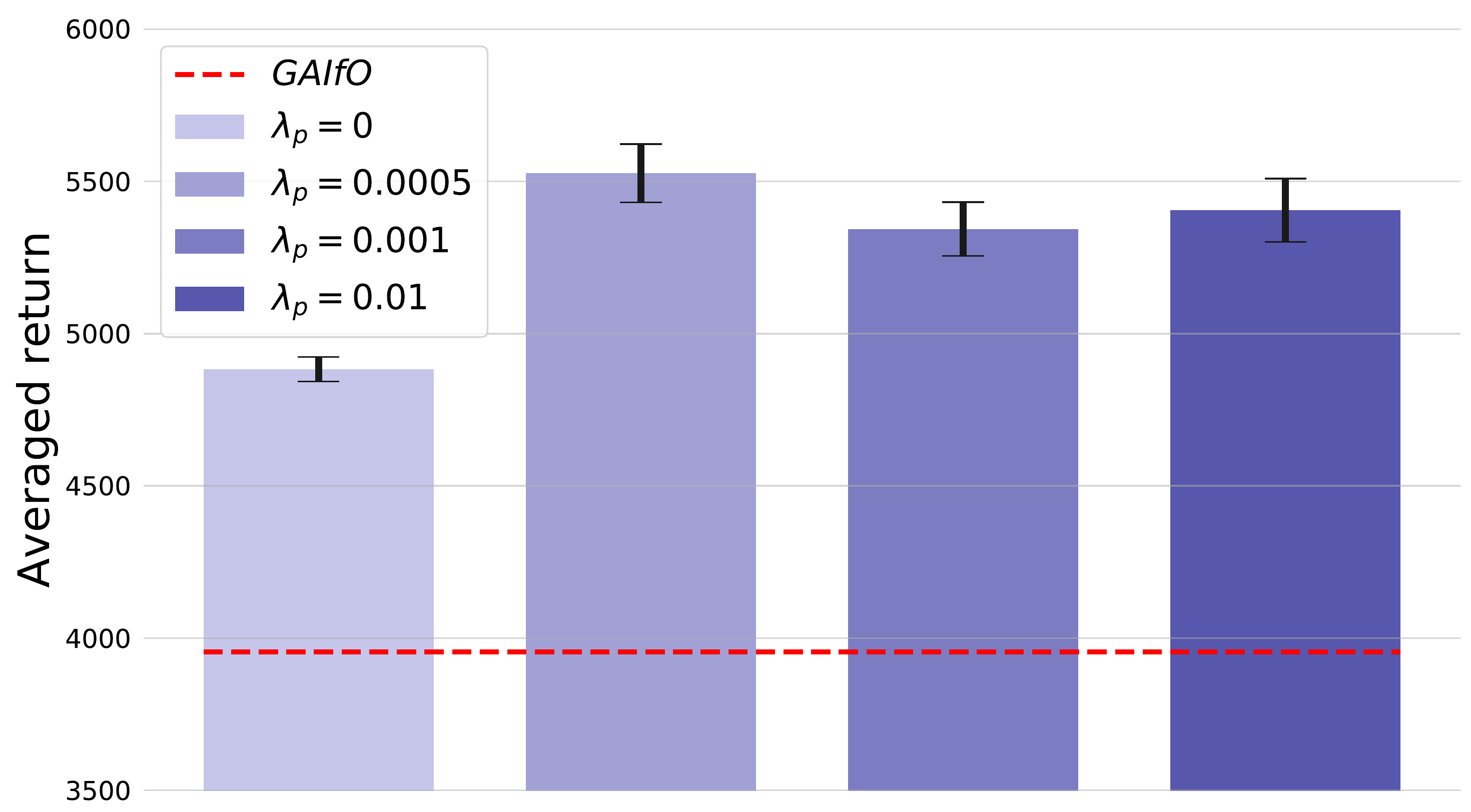} 
      \caption{Sensitivity to the policy entropy weight $\lambda_p$.}
    \vskip -0.1 in
      \label{fig:result_ablation}
\end{wrapfigure}
We design four groups of parameters on \textit{HalfCheetah}, where $\lambda_p$ is selected from $\{0, 0.0005, 0.001, 0.01\}$ and $\lambda_s$ is fixed at $0.01$. The final results are plotted in \autoref{fig:result_ablation}, with the learning curves and detailed quantitative results provided in the supplementary material. The results suggest that we can always promote the performances by adding policy entropy. Although different choices of $\lambda_p$ induce minor differences in their final performances, they are overall better than GAIfO that does not include the policy entropy term in its objective function.

\begin{wrapfigure}{r}{0.35\textwidth}
\vskip -0.2in
      \centering
      \includegraphics[width=\linewidth]{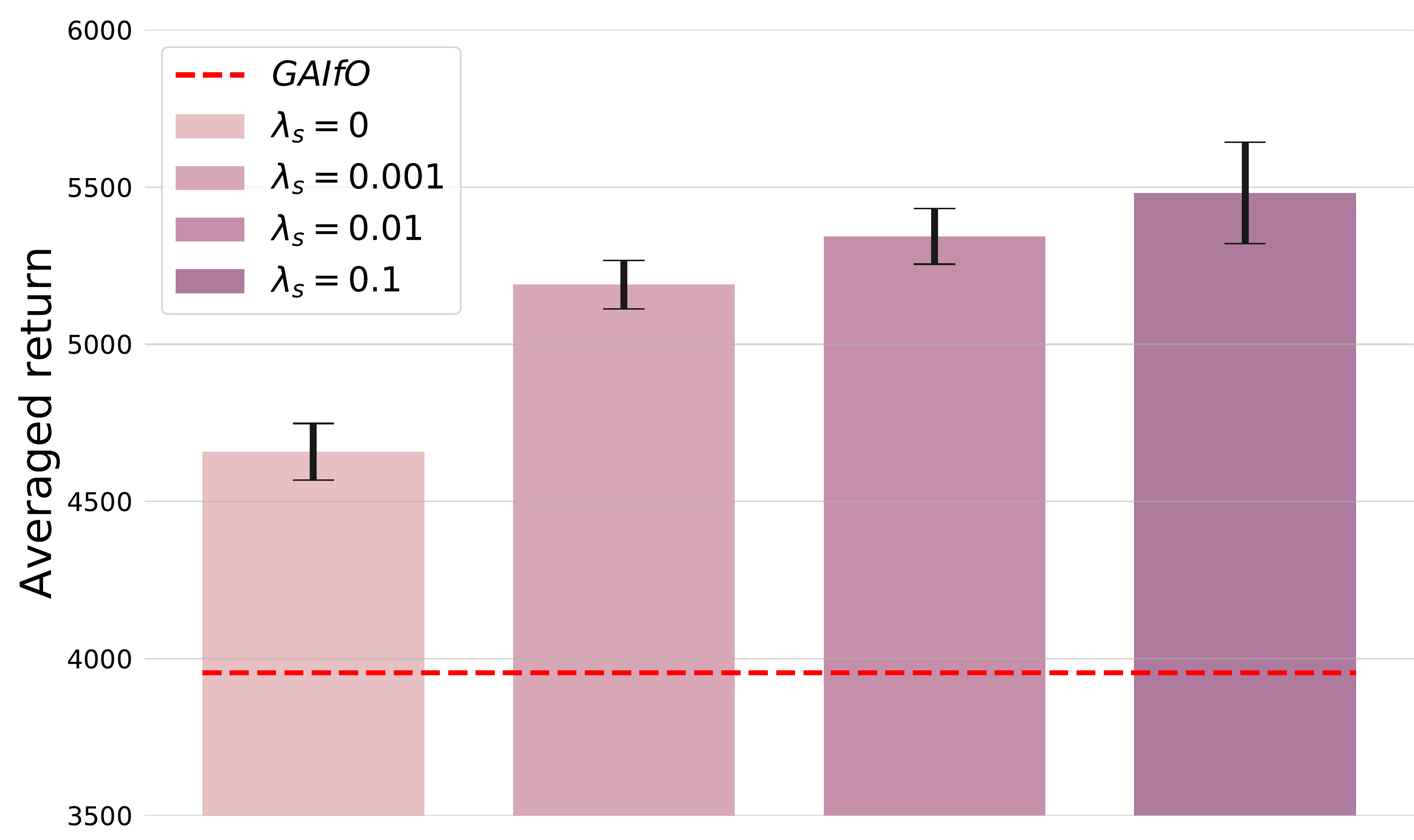}
      \caption{Sensitivity to the MI weight $\lambda_s$.}
      \vskip -0.2 in
      \label{fig:result_sensitivity}
\end{wrapfigure}

\paragraph{Sensitivity to Mutual Information.}
We conduct four groups of experiments on \textit{HalfCheetah} by ranging $\lambda_s$ from $0.0$ to $0.1$ and fixing $\lambda_p$ to be $0.001$. The final results are shown in \autoref{fig:result_sensitivity} (the learning curves and averaged return are also reported in the supplementary material). It is observed that the imitation performances could always benefit from adding the MI term, and the improvements become more significant when the $\lambda_s$ has a relatively large magnitude. All of the variants of our method consistently outperform GAIfO, thus indicating the importance of the mutual information term in our optimization objective.

We also provide the results of performing a grid search on $\lambda_s$ and $\lambda_p$ in the supplementary material to further illustrate how better performance could be potentially obtained.

\section{Conclusion}
\vskip -0.1in
In this paper, our goal is to perform imitation Learning from Observations (LfO). Based on the theoretical analysis for the difference between LfO and Learning from Demonstrations (LfD), we introduce inverse dynamics disagreement and demonstrate it amounts to the gap between LfD and LfO. To minimize inverse dynamics disagreement in a principled and efficient way, we realize its upper bound as a particular negative causal entropy and optimize it via a model-free method. Our model, dubbed as \textit{Inverse-Dynamics-Disagreement-Minimization} (IDDM), attains consistent improvement over other \textit{LfO} counterparts on various challenging benchmarks.
While our paper mainly focuses on control planning, further exploration on combining our work with representation learning to enable imitation across different domains could be a new direction for future work.

\subsubsection*{Acknowledgments}
This research was jointly funded by Key (key grant) Project of Chinese Ministry of Education (Grant No.2018AAA0102900) and National Science Foundation of China (Grant No.91848206). It was also partially supported by the National Science Foundation of China (NSFC) and the German Research Foundation (DFG) in project Cross Modal Learning, NSFC 61621136008/DFG TRR-169. We would like to thank Mingxuan Jing and Dr.~Boqing Gong for the insightful discussions and the anonymous reviewers for the constructive feedback.

\bibliographystyle{plainnat}
\bibliography{reference.bib}
\newpage
\appendix

\newpage
\section{Proofs}
\subsection{Theorem 1 and its Corollary}
\begin{assumption}\label{assumption:1}
  The environment dynamics is stationary between the expert and agent.
\end{assumption}
  
\begin{lemma}\label{lemma:1}
The equality below holds.
\begin{align}
  \mathbb{D}_{\operatorname{KL}}\left[\rho_\pi(s,a,s') || \rho_E(s,a,s')\right] &= \mathbb{D}_{\operatorname{KL}}\left[\rho_\pi(s,a) || \rho_E(s,a)\right]\text{.}
\end{align}
\end{lemma}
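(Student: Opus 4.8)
The plan is to exploit the chain-rule factorization of the joint occupancy measure recorded in \autoref{tab:definition}, namely $\rho_\pi(s,a,s') = \rho_\pi(s,a)\,\mathcal{T}(s'|s,a)$, together with \autoref{assumption:1}, which guarantees that the \emph{same} transition kernel $\mathcal{T}(s'|s,a)$ governs both the agent's and the expert's joint measures. The point is that the shared conditional factor cancels inside the log-ratio, so the KL divergence over $\mathcal{S}\times\mathcal{A}\times\mathcal{S}$ collapses to the KL divergence over $\mathcal{S}\times\mathcal{A}$.

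Concretely, I would start from the definition
\begin{align*}
  \mathbb{D}_{\operatorname{KL}}\!\left[\rho_\pi(s,a,s') \,\|\, \rho_E(s,a,s')\right]
  = \int_{\mathcal{S}\times\mathcal{A}\times\mathcal{S}} \rho_\pi(s,a,s')\,\log\frac{\rho_\pi(s,a,s')}{\rho_E(s,a,s')}\, ds\, da\, ds',
\end{align*}
then substitute $\rho_\pi(s,a,s')=\rho_\pi(s,a)\mathcal{T}(s'|s,a)$ and $\rho_E(s,a,s')=\rho_E(s,a)\mathcal{T}(s'|s,a)$. Under \autoref{assumption:1} the $\mathcal{T}(s'|s,a)$ factors in numerator and denominator of the ratio are identical and cancel, leaving $\log\big(\rho_\pi(s,a)/\rho_E(s,a)\big)$, which no longer depends on $s'$. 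I then pull that term out of the $ds'$ integral and use $\int_{\mathcal{S}}\mathcal{T}(s'|s,a)\,ds' = 1$ to integrate out $s'$, obtaining exactly $\int_{\mathcal{S}\times\mathcal{A}}\rho_\pi(s,a)\log\big(\rho_\pi(s,a)/\rho_E(s,a)\big)\,ds\,da = \mathbb{D}_{\operatorname{KL}}\!\left[\rho_\pi(s,a)\,\|\,\rho_E(s,a)\right]$.

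Since the computation is a one-line change of variables plus a cancellation, there is no deep obstacle; the only things to be careful about are regularity conditions. First, I should note that KL divergences are defined under the usual absolute-continuity assumption (the agent's occupancy measure is dominated by the expert's on the relevant support), so that the ratios are well-defined $\rho_\pi$-almost everywhere and the cancellation of $\mathcal{T}$ is legitimate even where $\mathcal{T}(s'|s,a)=0$. Second, I should confirm that the $s'$-marginal of $\rho_\pi(s,a,s')$ is indeed $\rho_\pi(s,a)$, which is immediate from the Fubini-type interchange justified by nonnegativity of the integrand and the definitions in \autoref{tab:definition}. With these remarks in place the equality follows, and this lemma is precisely the ingredient that lets \autoref{theorem:1} be obtained by subtracting the state-transition KL term from the state-action KL term.
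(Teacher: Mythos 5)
Your proposal is correct and follows essentially the same route as the paper's own proof: factor the joint occupancy measure as $\rho(s,a,s')=\rho(s,a)\mathcal{T}(s'|s,a)$, cancel the shared transition kernel inside the log-ratio using \autoref{assumption:1}, and integrate out $s'$. The paper's version is just terser (it writes the expectation $\mathbb{E}_{\rho_\pi}$ and leaves the marginalization implicit), while you spell out the $\int_{\mathcal{S}}\mathcal{T}(s'|s,a)\,ds'=1$ step and the absolute-continuity caveat explicitly.
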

\begin{proof}
  We can expand the left side of the equality following Kullback–Leibler divergence definition and \autoref{assumption:1} as
  \begin{align}
  \begin{split}
    & \mathbb{D}_{\operatorname{KL}}\left[\rho_\pi(s,a,s') || \rho_E(s,a,s')\right] \nonumber\\
    & = \mathbb{E}_{\rho_\pi}\left[\log\frac{\rho_\pi(s,a,s')}{\rho_E(s,a,s')}\right] \nonumber\\
    & = \mathbb{E}_{\rho_\pi}\left[\log\frac{\rho_\pi(s,a)\mathcal{T}(s'|s,a)}{\rho_E(s,a)\mathcal{T}(s'|s,a)}\right] \nonumber\\ 
    & = \mathbb{D}_{\operatorname{KL}}\left[\rho_\pi(s,a) || \rho_E(s,a)\right]\text{.}
  \end{split}
  \end{align}
\end{proof}

  \begin{theorem*}
 The relation between LfD, naive LfO, and inverse dynamics disagreement can be characterized as
  \begin{align}
  \begin{split}
    \mathbb{D}_{\operatorname{KL}}\left(\rho_\pi(a|s, s') || \rho_E(a|s, s')\right) = \mathbb{D}_{\operatorname{KL}}\left(\rho_\pi(s,a) || \rho_E(s,a)\right) - \mathbb{D}_{\operatorname{KL}}\left(\rho_\pi(s,s') || \rho_E(s,s')\right)\text{.}
  \end{split}
  \end{align}
  \end{theorem*}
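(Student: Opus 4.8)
The plan is to decompose all four KL divergences in terms of the joint occupancy measure $\rho_\pi(s,a,s')$ and work with conditional factorizations. The key observation is that, because the agent and expert share the same dynamics $\mathcal{T}(s'|s,a)$, the joint occupancy measure factors in two different but compatible ways: as $\rho_\pi(s,a)\,\mathcal{T}(s'|s,a)$ (the state-action factorization) and as $\rho_\pi(s,s')\,\rho_\pi(a|s,s')$ (the state-transition factorization via the inverse dynamics model of Definition~\ref{def:inverse-model}). Equating the two factorizations inside the logarithm of the KL divergence is what will let the $\mathcal{T}$ terms cancel between numerator and denominator.

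First I would invoke \autoref{lemma:1}, which already establishes $\mathbb{D}_{\operatorname{KL}}\left[\rho_\pi(s,a,s') || \rho_E(s,a,s')\right] = \mathbb{D}_{\operatorname{KL}}\left[\rho_\pi(s,a) || \rho_E(s,a)\right]$, so it suffices to prove
\begin{align*}
  \mathbb{D}_{\operatorname{KL}}\left[\rho_\pi(s,a,s') || \rho_E(s,a,s')\right] = \mathbb{D}_{\operatorname{KL}}\left[\rho_\pi(s,s') || \rho_E(s,s')\right] + \mathbb{D}_{\operatorname{KL}}\left[\rho_\pi(a|s,s') || \rho_E(a|s,s')\right]\text{.}
\end{align*}
Next I would expand the left-hand side by writing $\rho_\pi(s,a,s') = \rho_\pi(s,s')\rho_\pi(a|s,s')$ and similarly for the expert, so that the log-ratio splits as $\log\frac{\rho_\pi(s,s')}{\rho_E(s,s')} + \log\frac{\rho_\pi(a|s,s')}{\rho_E(a|s,s')}$. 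Taking the expectation over $\rho_\pi(s,a,s')$, the first term depends only on $(s,s')$ and integrates down to $\mathbb{D}_{\operatorname{KL}}\left[\rho_\pi(s,s') || \rho_E(s,s')\right]$ after marginalizing out $a$; the second term is by definition the expectation defining the inverse dynamics disagreement (Definition~\ref{def:idd}), since $\rho_\pi(a|s,s')$ is the conditional of $\rho_\pi(s,a,s')$ given $(s,s')$. Rearranging gives exactly the claimed identity.

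The main obstacle — though it is conceptual rather than computational — is justifying the two-way factorization of the joint occupancy measure: one must check that $\rho_\pi(s,a,s') = \rho_\pi(s,a)\mathcal{T}(s'|s,a)$ (this is the definition in \autoref{tab:definition}) genuinely coincides with $\rho_\pi(s,s')\rho_\pi(a|s,s')$ for the inverse model as defined in~\eqref{equ:inversemodel}, and that the $\rho_\pi(s,s')$ appearing here is the same state-transition occupancy measure $\int_{\mathcal{A}}\rho_\pi(s,\bar a)\mathcal{T}(s'|s,\bar a)d\bar a$ used in naive LfO. This is where \autoref{assumption:1} (shared dynamics) is essential: it guarantees $\mathcal{T}$ is the same object in both the agent's and expert's joint measures, so that the lemma applies and the conditioning manipulations are consistent. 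Once this bookkeeping is in place, the rest is just the chain rule for KL divergence applied to the factorization $\rho(s,a,s') = \rho(s,s')\rho(a|s,s')$, which contributes no cross terms because the disintegration is exact. A minor point worth a sentence is non-negativity: all three surviving KL terms are $\geqslant 0$, which immediately yields \autoref{coro:1} in the injective case where $\rho_\pi(a|s,s')$ collapses to a policy-independent Dirac mass and the disagreement vanishes.
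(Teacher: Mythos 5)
Your proposal is correct and follows essentially the same route as the paper: the paper likewise invokes \autoref{lemma:1} to replace $\mathbb{D}_{\operatorname{KL}}(\rho_\pi(s,a)\,||\,\rho_E(s,a))$ by the joint-measure divergence, and then performs exactly your chain-rule computation — subtracting $\mathbb{D}_{\operatorname{KL}}(\rho_\pi(s,s')\,||\,\rho_E(s,s'))$ from $\mathbb{D}_{\operatorname{KL}}(\rho_\pi(s,a,s')\,||\,\rho_E(s,a,s'))$ and identifying the surviving log-ratio as $\log\frac{\rho_\pi(a|s,s')}{\rho_E(a|s,s')}$. Your explicit check that the conditional of the joint occupancy measure coincides with the inverse dynamics model of~\eqref{equ:inversemodel} is a useful piece of bookkeeping that the paper leaves implicit, but it does not change the argument.
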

  \begin{proof}
    We can subtract the Kullback-Leibler divergence between the state transition of expert and agent $\mathbb{D}_{\operatorname{KL}}(\rho_\pi(s,s') || \rho_E(s,s'))$ from the corresponding discrepancy over joint distribution $\mathbb{D}_{\operatorname{KL}}(\rho_\pi(s,a,s') || \rho_E(s,a,s'))$ as
    \allowdisplaybreaks
    \begin{align}\label{equ:thm1}
    &\mathbb{D}_{\operatorname{KL}}\left(\rho_\pi(s,a,s') || \rho_E(s,a,s')\right) - \mathbb{D}_{\operatorname{KL}}\left(\rho_\pi(s,s') || \rho_E(s,s')\right) \nonumber\\
    &= \int_{\mathcal{S} \times \mathcal{A} \times \mathcal{S}}\rho_\pi(s,a,s')\left(\log \frac{\rho_\pi(s, a, s')}{\rho_E(s,a,s')}\times \frac{\rho_E(s,s')}{\rho_\pi(s,s')}\right)dsdads' \nonumber\\
    &= \int_{\mathcal{S} \times \mathcal{A} \times \mathcal{S}}\rho_\pi(s,a,s')\log \frac{\rho_\pi(a|s,s')}{\rho_E(a|s,s')}dsdads' \nonumber\\
    &= \mathbb{D}_{\operatorname{KL}}\left(\rho_\pi(a|s,s') || \rho_E(a|s,s')\right)\text{.} 
    \end{align}
    With \autoref{lemma:1}, we have
    \allowdisplaybreaks
    \begin{align}\label{equ:thm2}
      &\mathbb{D}_{\operatorname{KL}}(\rho_\pi(s,a,s') || \rho_E(s,a,s')) - \mathbb{D}_{\operatorname{KL}}(\rho_\pi(s,s') || \rho_E(s,s')) \nonumber\\
      &= \mathbb{D}_{\operatorname{KL}}(\rho_\pi(s,a) || \rho_E(s,a)) - \mathbb{D}_{\operatorname{KL}}(\rho_\pi(s,s') || \rho_E(s,s'))\text{.}
    \end{align}
    With~\eqref{equ:thm1},~\eqref{equ:thm2}
    \allowdisplaybreaks
    \begin{align}
      \mathbb{D}_{\operatorname{KL}}\left(\rho_\pi(a|s, s') || \rho_E(a|s, s')\right) = \mathbb{D}_{\operatorname{KL}}\left(\rho_\pi(s,a) || \rho_E(s,a)\right) - \mathbb{D}_{\operatorname{KL}}\left(\rho_\pi(s,s') || \rho_E(s,s')\right)\nonumber\text{.} 
    \end{align}
  \end{proof}

  We now introduce the following lemma that will be used in the proof for \autoref{coro:1}.
  \begin{lemma}\label{lemma:degenerate}
    if the dynamics is injective, i.e., distribution $\mathcal{T}(s'|s,a)$ that characterizes the dynamics is a degenerate distribution, and there is only one possible action corresponds to a state transition, the conditional distribution $\rho_\pi(a|s,s')$ that characterizes the corresponding inverse model under policy $\pi$ will also be injective, and will be independent with policy $\pi$, thus we have
    \allowdisplaybreaks
    \begin{align}
        \rho_\pi(a|s,s') = \rho_E(a|s,s')\text{,}
    \end{align}
    where $\rho_E(a|s,s')$ characterizes the corresponding inverse dynamics model for the expert.
    \end{lemma}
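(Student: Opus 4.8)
The plan is to argue directly from the closed form of the inverse dynamics model in \eqref{equ:inversemodel} and exploit the fact that, under the hypothesis, the numerator $\mathcal{T}(s'|s,a)\pi(a|s)$ is supported on a single action. I read ``$\mathcal{T}(s'|s,a)$ is a degenerate distribution, injective w.r.t.\ $a$'' as: there is a deterministic transition map $f$ with $\mathcal{T}(\,\cdot\,|s,a)$ equal to the point mass at $f(s,a)$, and $f(s,\cdot)$ injective for each fixed $s$.

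First I would fix a state transition $(s,s')$ that is reachable under $\pi$, i.e.\ one for which the normalizing integral $\int_{\mathcal{A}}\mathcal{T}(s'|s,\bar a)\pi(\bar a|s)\,d\bar a$ is strictly positive; outside this set $\rho_\pi(s,s')=0$ and the conditional $\rho_\pi(a|s,s')$ is immaterial, so there is nothing to prove there. By injectivity the equation $f(s,a)=s'$ has a unique solution, call it $a^\star=a^\star(s,s')$, so that $\mathcal{T}(s'|s,a)$ is nonzero precisely at $a=a^\star$. Hence $\mathcal{T}(s'|s,a)\pi(a|s)$ vanishes for every $a\neq a^\star$, the denominator in \eqref{equ:inversemodel} collapses to $\pi(a^\star|s)$, and — since reachability forces $\pi(a^\star|s)>0$ — the ratio reduces to the point mass at $a^\star$, i.e.\ $\rho_\pi(a|s,s')=\delta\!\left(a-a^\star(s,s')\right)$.

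The key observation is then that $a^\star(s,s')$ is determined by $\mathcal{T}$ alone and does not involve $\pi$. Running the same computation verbatim with the expert policy $\pi_E$ in place of $\pi$ gives $\rho_E(a|s,s')=\delta\!\left(a-a^\star(s,s')\right)$ on every transition reachable under $\pi_E$. Therefore, wherever $\rho_\pi(s,s')$ has support (which, in the surrounding divergence statements, is contained in the expert's support), the two inverse models coincide, giving the claimed identity $\rho_\pi(a|s,s')=\rho_E(a|s,s')$; feeding this into \autoref{theorem:1} immediately yields \autoref{coro:1}.

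I expect the only delicate point to be the bookkeeping around supports and the use of a Dirac mass: one must phrase the argument so that the division in \eqref{equ:inversemodel} is carried out only where the normalizing integral is positive, and so that ``degenerate'' and ``injective'' are given the interpretation above. If one prefers to avoid generalized densities, the identical reasoning goes through in the discrete setting, where $\mathcal{T}(s'|s,a)\in\{0,1\}$ and the sum in the denominator literally isolates the single surviving term $\pi(a^\star|s)$.
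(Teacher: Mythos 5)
Your proof takes essentially the same route as the paper's: substitute the deterministic, injective dynamics $\mathcal{T}(s'|s,a)=\delta(s'-f(s,a))$ into the definition \eqref{equ:inversemodel}, observe that the numerator is supported only at the unique action $a^\star(s,s')$ solving $f(s,a)=s'$ so the conditional collapses to a point mass determined by $\mathcal{T}$ alone, and conclude it is identical for $\pi$ and $\pi_E$. Your bookkeeping around reachable transitions and your notation $\delta(a-a^\star(s,s'))$ are in fact somewhat more careful than the paper's (which informally writes the resulting mass as $\delta(0)$), but there is no substantive difference in approach.
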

    \begin{proof}
    We will begin with the definition of inverse model as
    \allowdisplaybreaks
    \begin{align}
        \rho_\pi(a|s, s') = \frac{\mathcal{T}(s'|s, a)\pi(a|s)}{\int_A \mathcal{T}(s'|s, \bar{a})\pi(\bar{a}|s)d\bar{a}}\text{.}
    \end{align}
    Since $\mathcal{T}$ is injective, which means that $\mathcal{T}(s'|s,a) = \delta(s'-f(s,a)), f: \mathcal{S}\times\mathcal{A}\rightarrow \mathcal{S}$, $f$ is a deterministic function that independent with policy $\pi$, $\delta$ is Dirac delta function. When $f(s,a) = s'$, and for given $s$, $s'$, there is only one $a$ satisfy this equation, we have
    \allowdisplaybreaks
    \begin{align}
        \rho_\pi(a|s,s') &=
        \frac{\delta(0)\times\pi(a|s)}{1\times\pi(\bar{a}=a|s)} \nonumber \\
        & = \delta(0)\text{.}
    \end{align}
    When $f(s,a) \neq s'$, it will be
    \allowdisplaybreaks
    \begin{align}
        \rho_\pi(a|s,s') &=
        \frac{0\times\pi(a|s)}{\int_A \mathcal{T}(s'|s, \bar{a})\pi(a|s)d\bar{a}} \nonumber \\
        & = 0\text{.}
    \end{align}
    Finally we can rewrite $\rho_\pi(a|s, s')$ as
    \allowdisplaybreaks
    \begin{align}
        \rho_\pi(a|s, s') = \begin{cases} \delta(0)& \text{$f(s,a) = s'$}, \\ 0& \text{$f(s,a) \neq s'$}\end{cases}\text{,}
    \end{align}
    which is independent with current policy $\pi$, thus we have
    \allowdisplaybreaks
    \begin{align}
        \rho_\pi(a|s,s') = \rho_E(a|s,s') = \begin{cases} \delta(0)& \text{$f(s,a) = s'$}, \\ 0& \text{$f(s,a) \neq s'$}\end{cases}\text{.}
    \end{align}
    \end{proof}
    \begin{corollary*}
        If the dynamics $\mathcal{T}(s'|s,a)$ is injective, LfD is equivalent to naive LfO.      
        \begin{align}
          \mathbb{D}_{\operatorname{KL}}\left(\rho_\pi(s,a) || \rho_E(s,a)\right) = \mathbb{D}_{\operatorname{KL}}\left(\rho_\pi(s,s') || \rho_E(s,s')\right)\text{.}
      \end{align}
    \end{corollary*}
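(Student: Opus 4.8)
The plan is to derive the corollary as an immediate consequence of \autoref{theorem:1} together with \autoref{lemma:degenerate}. \autoref{theorem:1} already gives us the master identity
\begin{align*}
  \mathbb{D}_{\operatorname{KL}}\left(\rho_\pi(a|s, s') || \rho_E(a|s, s')\right) = \mathbb{D}_{\operatorname{KL}}\left(\rho_\pi(s,a) || \rho_E(s,a)\right) - \mathbb{D}_{\operatorname{KL}}\left(\rho_\pi(s,s') || \rho_E(s,s')\right)\text{,}
\end{align*}
so the whole task reduces to showing that the left-hand side vanishes whenever $\mathcal{T}(s'|s,a)$ is injective \emph{w.r.t.} $a$. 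That vanishing is exactly what \autoref{lemma:degenerate} is for: under injectivity the inverse dynamics model $\rho_\pi(a|s,s')$ becomes a policy-independent (degenerate) conditional distribution supported on the unique action consistent with the transition, hence $\rho_\pi(a|s,s') = \rho_E(a|s,s')$ identically.

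Concretely, I would proceed in three short steps. First, cite \autoref{lemma:degenerate} to get the pointwise identity $\rho_\pi(a|s,s') = \rho_E(a|s,s')$ for all $(s,a,s')$ in the relevant support. Second, substitute this identity into the definition of the inverse dynamics disagreement, $\mathbb{D}_{\operatorname{KL}}(\rho_\pi(a|s,s') || \rho_E(a|s,s')) = \mathbb{E}_{\rho_\pi}\!\left[\log \tfrac{\rho_\pi(a|s,s')}{\rho_E(a|s,s')}\right]$, so that the log-ratio is identically zero and the disagreement equals $0$. Third, plug $0$ into the left-hand side of the \autoref{theorem:1} identity and rearrange to obtain
\begin{align*}
  \mathbb{D}_{\operatorname{KL}}\left(\rho_\pi(s,a) || \rho_E(s,a)\right) = \mathbb{D}_{\operatorname{KL}}\left(\rho_\pi(s,s') || \rho_E(s,s')\right)\text{,}
\end{align*}
which is the claimed equivalence of LfD and naive LfO.

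The only delicate point — and the step I would be most careful about — is the measure-theoretic handling of the degenerate distribution: the inverse dynamics model here is a Dirac-type conditional, so one should either argue on the discrete/finite-precision model used elsewhere in the paper (where the conditional is an honest probability mass function and the KL divergence between two identical pmfs is manifestly $0$), or note that since \autoref{lemma:degenerate} establishes the two conditionals are the \emph{same} distribution, $\mathbb{D}_{\operatorname{KL}}(p\|p)=0$ holds regardless of whether $p$ has a density, so no genuine subtlety remains. I would also remark, for completeness, that \autoref{theorem:1} was proved via \autoref{lemma:1} under the shared-dynamics \autoref{assumption:1}, which continues to hold in the injective case, so the identity we are rearranging is legitimately available.
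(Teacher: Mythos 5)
Your proposal is correct and follows essentially the same route as the paper: both arguments hinge on \autoref{lemma:degenerate} forcing $\rho_\pi(a|s,s')=\rho_E(a|s,s')$ so that the inverse dynamics disagreement vanishes, combined with the decomposition of $\mathbb{D}_{\operatorname{KL}}\left(\rho_\pi(s,a)\,||\,\rho_E(s,a)\right)$ into the state-transition term plus the inverse-dynamics term (you invoke this via \autoref{theorem:1} directly, while the paper re-expands it from \autoref{lemma:1}, which is the same computation). Your remark on the Dirac-type conditional is a reasonable extra precaution that the paper's own proof does not spell out, but it does not change the substance of the argument.
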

    \begin{proof}
      With \autoref{lemma:1}, we can substitute the right side of the equality as
      \allowdisplaybreaks
      \begin{align}
          &\mathbb{D}_{\operatorname{KL}}(\rho_\pi(s,a) || \rho_E(s,a)) \nonumber\\
          &= \mathbb{D}_{\operatorname{KL}}(\rho_\pi(s,a,s') || \rho_E(s,a,s')) \nonumber \\
          &= \mathbb{E}_{\rho_\pi}\left[\log \frac{\rho_\pi(s,a,s')}{\rho_\pi(s,a,s')}\right]\nonumber \\
          &= \mathbb{E}_{\rho_\pi}\left[ \frac{\rho_\pi(s,s')\rho_\pi(a|s,s')}{\rho_E(s,s')\rho_E(a|s,s')}\right] \nonumber\\
          &= \underbrace{\mathbb{E}_{\rho_\pi}\left[ \frac{\rho_\pi(s,s')}{\rho_E(s,s')}\right]}_{\text{by \autoref{lemma:degenerate}}} \nonumber \\
          &= \mathbb{D}_{\operatorname{KL}}(\rho_\pi(s,s') || \rho_E(s,s')) \text{.}
      \end{align}
    \end{proof}

\begin{theorem*}
Let $\mathcal{H}_\pi(s, a)$ and $\mathcal{H}_E(s, a)$ denote the causal entropies over the state-action occupancy measures of the agent and expert, respectively. When $\mathbb{D}_{\operatorname{KL}}\left[\rho_\pi(s,s') || \rho_E(s,s')\right]$ is minimized, we have  
\begin{align}
  \begin{split}
    \mathbb{D}_{\operatorname{KL}}\left[\rho_\pi(a|s,s') || \rho_E(a|s,s')\right] 
    & \leqslant -\mathcal{H}_\pi(s, a) + \text{Const}\text{.}
  \end{split}
  \end{align}
 \end{theorem*}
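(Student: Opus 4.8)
The plan is to run \autoref{theorem:1} through the obvious rewrites and then put the real work into controlling one leftover cross-entropy by a quantity that is free of the agent policy $\pi$. \autoref{theorem:1} already gives $\mathbb{D}_{\operatorname{KL}}(\rho_\pi(a|s,s') || \rho_E(a|s,s')) = \mathbb{D}_{\operatorname{KL}}(\rho_\pi(s,a) || \rho_E(s,a)) - \mathbb{D}_{\operatorname{KL}}(\rho_\pi(s,s') || \rho_E(s,s'))$. Reading ``$\mathbb{D}_{\operatorname{KL}}[\rho_\pi(s,s') || \rho_E(s,s')]$ is minimized'' as ``the minimum value $0$ is attained'', i.e.\ $\rho_\pi(s,s') = \rho_E(s,s')$ (attainable, e.g., by the expert's own transition occupancy), the subtracted term vanishes, so
\begin{align*}
\mathbb{D}_{\operatorname{KL}}\left(\rho_\pi(a|s,s') || \rho_E(a|s,s')\right) = \mathbb{D}_{\operatorname{KL}}\left(\rho_\pi(s,a) || \rho_E(s,a)\right) = -\mathcal{H}_\pi(s,a) - \mathbb{E}_{\rho_\pi(s,a)}\left[\log \rho_E(s,a)\right].
\end{align*}
Hence everything reduces to bounding the cross-entropy $-\mathbb{E}_{\rho_\pi(s,a)}[\log \rho_E(s,a)]$ by a constant independent of $\pi$.

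For that bound I would exploit the deterministic dynamics rather than estimate crudely. Writing $s'=f(s,a)$ for the deterministic transition map, $\rho_E(s,a)=\rho_E(s,s')\,\rho_E(a|s,s')$ at $s'=f(s,a)$ — the discrete reading of $\rho_E(s,a,s')=\rho_E(s,a)\mathcal{T}(s'|s,a)=\rho_E(s,s')\rho_E(a|s,s')$. Splitting the logarithm and pushing $(s,a)\mapsto(s,f(s,a))$ forward, which carries $\rho_\pi(s,a)$ to $\rho_\pi(s,s')$, gives $-\mathbb{E}_{\rho_\pi(s,a)}[\log\rho_E(s,a)] = -\mathbb{E}_{\rho_\pi(s,s')}[\log\rho_E(s,s')] + \mathbb{E}_{\rho_\pi(s,s')}\mathbb{E}_{\rho_\pi(a|s,s')}[-\log\rho_E(a|s,s')]$. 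Since $\rho_\pi(s,s')=\rho_E(s,s')$, the first term is exactly $\mathcal{H}_E(s,s')$. For the second, for each fixed $(s,s')$ both $\rho_\pi(\cdot|s,s')$ and $\rho_E(\cdot|s,s')$ are supported on the preimage $\{a: f(s,a)=s'\}$, so $-\log\rho_E(a|s,s')\le -\log m(s,s')$ on that support, with $m(s,s'):=\min\{\rho_E(a|s,s'): \rho_E(a|s,s')>0\}>0$; averaging over $\rho_E(s,s')$ bounds the second term by $-\mathbb{E}_{\rho_E(s,s')}[\log m(s,s')]$, again free of $\pi$. Collecting, $\mathbb{D}_{\operatorname{KL}}(\rho_\pi(a|s,s') || \rho_E(a|s,s')) \le -\mathcal{H}_\pi(s,a) + \mathcal{H}_E(s,s') - \mathbb{E}_{\rho_E(s,s')}[\log m(s,s')]$, i.e.\ the claim with $\text{Const}=\mathcal{H}_E(s,s')-\mathbb{E}_{\rho_E(s,s')}[\log m(s,s')]$. (A shortcut that sidesteps the hypothesis is to drop the non-negative $\mathbb{D}_{\operatorname{KL}}(\rho_\pi(s,s') || \rho_E(s,s'))$ outright and bound $-\mathbb{E}_{\rho_\pi(s,a)}[\log\rho_E(s,a)]$ pointwise by $-\log\min\{\rho_E(s,a):\rho_E(s,a)>0\}$; this gives the same shape of bound with a looser constant, and the refined route above is what makes the constant essentially tight and pins down where the ``minimized'' hypothesis is actually used.)

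The hard part is precisely this cross-entropy: $-\mathbb{E}_{\rho_\pi(s,a)}[\log\rho_E(s,a)]$ is in general $+\infty$ — if $\pi$ ever realizes an $(s,a)$ that the expert assigns zero probability, then $\operatorname{supp}\rho_\pi\not\subseteq\operatorname{supp}\rho_E$, and in that case the left-hand side $\mathbb{D}_{\operatorname{KL}}(\rho_\pi(a|s,s') || \rho_E(a|s,s'))$ is already $+\infty$, so the statement should be understood under the implicit support-containment condition. Even granting support containment, the finiteness of $m(s,s')$ (equivalently of $\min\{\rho_E>0\}$) rests on the finite-bit-precision / discreteness assumption the paper already invokes to get $\mathcal{H}_\pi(s|s',a)=0$; the same assumption is what justifies the factorization $\rho_E(s,a)=\rho_E(s,s')\rho_E(a|s,s')$ without a Jacobian correction. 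I would state these two points explicitly, and note that the deterministic-dynamics assumption enters a second time through $\mathcal{H}_\pi(s|s',a)=0$ if one prefers to replace the pushforward step by the entropy chain rule $\mathcal{H}_\pi(s,a)=\mathcal{H}_\pi(s,s')+\mathcal{H}_\pi(a|s,s')$.
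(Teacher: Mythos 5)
Your argument is correct and shares its skeleton with the paper's proof: both apply the \autoref{theorem:1} decomposition (the paper re-derives it inline via \autoref{lemma:1}), both read ``minimized'' as $\mathbb{D}_{\operatorname{KL}}(\rho_\pi(s,s') \,||\, \rho_E(s,s'))=0$ so that term drops, and both split $\mathbb{D}_{\operatorname{KL}}(\rho_\pi(s,a) \,||\, \rho_E(s,a))$ into $-\mathcal{H}_\pi(s,a)$ plus the cross-entropy $-\mathbb{E}_{\rho_\pi(s,a)}[\log\rho_E(s,a)]$. Where you genuinely diverge is the last step. The paper bounds the cross-entropy by $\sup_{\rho_\pi}\bigl(-\int_{\mathcal{S}\times\mathcal{A}}\rho_\pi(s,a)\log\rho_E(s,a)\,ds\,da\bigr)$ and calls that \textit{Const}, with the only justification being that it is independent of the optimized policy and can therefore be dropped from the objective; no finiteness is established, and as you observe the sup is $+\infty$ unless one restricts to policies whose occupancy stays inside $\operatorname{supp}\rho_E$. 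You instead use the deterministic dynamics to factor $\rho_E(s,a)=\rho_E(s,s')\rho_E(a|s,s')$, identify one piece as $\mathcal{H}_E(s,s')$ via $\rho_\pi(s,s')=\rho_E(s,s')$, and control the other by the minimum positive conditional mass, producing an explicit finite constant under explicitly stated support-containment and finite-precision caveats. This buys a sharper and honestly quantified bound, and it pinpoints where the ``minimized'' hypothesis and the discreteness assumption actually enter; what it costs is the extra hypotheses, which the paper avoids stating only because its \textit{Const} is allowed to be vacuous. Your parenthetical ``shortcut'' (pointwise bound by $-\log\min\{\rho_E(s,a):\rho_E(s,a)>0\}$) is, in effect, the paper's actual argument, so you have subsumed their proof and strengthened its weakest step.
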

\begin{proof}
  We will begin with the gap as the discrepancy between the inverse model of agent and expert
\allowdisplaybreaks
\begin{align}
  &\mathbb{D}_{\operatorname{KL}}\left(\rho_\pi(a|s,s') || \rho_E(a|s,s') \right) \nonumber\\
  &=\int_{\mathcal{S}\times\mathcal{A}\times\mathcal{S}}\rho_\pi(s,a,s')\log\frac{\rho_\pi(s,a,s')\rho_E(s,s')}{\rho_E(s,a,s')\rho_\pi(s,s')}dsdads'\nonumber\\
  &=\underbrace{\int_{\mathcal{S}\times\mathcal{A}\times\mathcal{S}}\rho_\pi(s,a,s')\log\frac{\rho_\pi(s,a)\rho_E(s,s')}{\rho_E(s,a)\rho_\pi(s,s')}dsdads'}_{\text{by \autoref{lemma:1}}} \nonumber\\
  &=\underbrace{\int_{\mathcal{S}\times\mathcal{A}\times\mathcal{S}}\rho_\pi(s,a,s')\log\frac{\rho_\pi(s,a)}{\rho_E(s,a)}}_{\mathbb{D}_{\operatorname{KL}}\left(\rho_\pi(s,s')||\rho_E(s,s')\right)=0}dsdads'\nonumber\\
  &= -\mathcal{H}_\pi(s,a) - \int_{\mathcal{S}\times\mathcal{A}}\rho_\pi(s,a)\log\rho_E(s,a)dsda \nonumber\\
  &\leqslant -\mathcal{H}_\pi(s,a) + \sup_{\rho_\pi}\left(-\int_{\mathcal{S}\times\mathcal{A}}\rho_\pi(s,a)\log\rho_E(s,a)dsda\right) \nonumber\\
  &=-\mathcal{H}_\pi(s,a) + \textit{Const}\text{.}
\end{align}
Note that the second term in the inequality $\sup_{\rho_\pi}(\cdot)$ cannot be optimized \emph{w.r.t.} the parameterized policy $\pi_\theta$ and thus can be omitted from the objective of maximizing $\mathcal{H}_\pi(s,a)$.
\end{proof}

\subsection{Theorem 1 and its Corollary with Jensen-Shannon Divergence}
\begin{lemma}[\autoref{lemma:1} with JS divergence]
  \label{lemma:1_js} The equality below holds.
  \begin{align}
  \mathbb{D}_{\operatorname{JS}}\left[\rho_\pi(s,a,s') || \rho_E(s,a,s')\right] &= \mathbb{D}_{\operatorname{JS}}\left[\rho_\pi(s,a) || \rho_E(s,a)\right]\text{.}
  \end{align}
  \end{lemma}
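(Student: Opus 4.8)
The plan is to peel the Jensen--Shannon divergence into its two Kullback--Leibler summands and then reuse, almost verbatim, the dynamics cancellation already exploited in the proof of \autoref{lemma:1}. Introduce the mixtures $M(s,a,s') := \tfrac12\bigl(\rho_\pi(s,a,s') + \rho_E(s,a,s')\bigr)$ and $M(s,a) := \tfrac12\bigl(\rho_\pi(s,a) + \rho_E(s,a)\bigr)$, so that by definition $\mathbb{D}_{\operatorname{JS}}(\rho_\pi(s,a,s') \| \rho_E(s,a,s')) = \tfrac12\mathbb{D}_{\operatorname{KL}}(\rho_\pi(s,a,s') \| M(s,a,s')) + \tfrac12\mathbb{D}_{\operatorname{KL}}(\rho_E(s,a,s') \| M(s,a,s'))$, and analogously for the $(s,a)$ marginals. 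The single structural fact I need is that, by \autoref{assumption:1}, both $\rho_\pi(s,a,s') = \rho_\pi(s,a)\mathcal{T}(s'|s,a)$ and $\rho_E(s,a,s') = \rho_E(s,a)\mathcal{T}(s'|s,a)$ factor through the \emph{same} kernel $\mathcal{T}$, hence so does the mixture: $M(s,a,s') = M(s,a)\,\mathcal{T}(s'|s,a)$.

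First I would establish the KL identity summand by summand: $\mathbb{D}_{\operatorname{KL}}(\rho_\pi(s,a,s') \| M(s,a,s')) = \mathbb{D}_{\operatorname{KL}}(\rho_\pi(s,a) \| M(s,a))$. In the density ratio $\rho_\pi(s,a,s')/M(s,a,s')$ the common factor $\mathcal{T}(s'|s,a)$ cancels, leaving $\rho_\pi(s,a)/M(s,a)$, which no longer depends on $s'$; and since $\int_{\mathcal{S}}\rho_\pi(s,a,s')\,ds' = \rho_\pi(s,a)$ (because $\mathcal{T}(\cdot|s,a)$ integrates to one), the expectation $\mathbb{E}_{\rho_\pi(s,a,s')}\bigl[\log(\rho_\pi(s,a)/M(s,a))\bigr]$ collapses to $\mathbb{E}_{\rho_\pi(s,a)}\bigl[\log(\rho_\pi(s,a)/M(s,a))\bigr]$. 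This is exactly the chain of equalities in the proof of \autoref{lemma:1}, just with $\rho_E$ replaced by $M$ throughout; replacing it is legitimate since the set of occupancy measures is convex, so $M$ is itself a valid occupancy measure over the same dynamics $\mathcal{T}$. The identical computation with $\rho_E$ in place of $\rho_\pi$ gives $\mathbb{D}_{\operatorname{KL}}(\rho_E(s,a,s') \| M(s,a,s')) = \mathbb{D}_{\operatorname{KL}}(\rho_E(s,a) \| M(s,a))$.

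Then I would simply add these two identities with weight $\tfrac12$, obtaining
\[
  \mathbb{D}_{\operatorname{JS}}\bigl(\rho_\pi(s,a,s') \,\|\, \rho_E(s,a,s')\bigr)
  = \mathbb{D}_{\operatorname{JS}}\bigl(\rho_\pi(s,a) \,\|\, \rho_E(s,a)\bigr),
\]
which is the assertion of the lemma.

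I do not expect a genuine obstacle here; the argument is bookkeeping around the definition of $\mathbb{D}_{\operatorname{JS}}$ plus one cancellation. The load-bearing step is $M(s,a,s') = M(s,a)\mathcal{T}(s'|s,a)$, which is precisely where \autoref{assumption:1} (shared dynamics) is used: if the agent and the expert transitioned under different kernels, the two joint measures would factor through different kernels, the factor would not cancel, and the equality would fail. The only fine point worth a sentence in the writeup is that all the KL terms in the JS decomposition are automatically well defined, since $M$ dominates both $\rho_\pi$ and $\rho_E$ by construction, so the density ratios never involve dividing a positive numerator by zero.
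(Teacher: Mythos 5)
Your proposal is correct and follows essentially the same route as the paper's proof: the paper likewise expands $\mathbb{D}_{\operatorname{JS}}$ into its two mixture-referenced terms, factors $\rho_\pi(s,a,s')=\rho_\pi(s,a)\mathcal{T}(s'|s,a)$ and $\rho_E(s,a,s')=\rho_E(s,a)\mathcal{T}(s'|s,a)$ so that the shared kernel $\mathcal{T}$ cancels in each density ratio, and collapses the expectations to the $(s,a)$ marginals. Your explicit naming of the mixture $M$ and the domination remark are harmless refinements of the same cancellation argument.
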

  \begin{proof}
  We can expand the left side of the equality as
  \allowdisplaybreaks[1]
  \begin{align}
  & \mathbb{D}_{\operatorname{JS}}\left[\rho_\pi(s,a,s') || \rho_E(s,a,s')\right] \nonumber \\
  &= \mathbb{E}_{\rho_\pi}\left[\frac{1}{2}\log\frac{\rho_\pi(s, a, s')}{\frac{1}{2}\rho_\pi(s, a, s') + \frac{1}{2}\rho_E(s, a, s')}\right] + \mathbb{E}_{\rho_E}\left[\frac{1}{2}\log\frac{\rho_E(s, a, s')}{\frac{1}{2}\rho_\pi(s, a, s') + \frac{1}{2}\rho_E(s, a, s')}\right] \nonumber\\
  &= \mathbb{E}_{\rho_\pi}\left[\frac{1}{2}\log\frac{\rho_\pi(s, a)\mathcal{T}(s'|s, a)}{(\frac{1}{2}\rho_\pi(s, a) + \frac{1}{2}\rho_E(s, a))\mathcal{T}(s'|s, a)}\right] \nonumber\\
  &~~~~+ \mathbb{E}_{\rho_E}\left[\frac{1}{2}\log\frac{\rho_E(s, a)\mathcal{T}(s'|s, a)}{(\frac{1}{2}\rho_\pi(s, a) + \frac{1}{2}\rho_E(s, a))\mathcal{T}(s'|s, a)}\right] \nonumber\\
  &= \mathbb{E}_{\rho_\pi}\left[\frac{1}{2}\log\frac{\rho_\pi(s, a)}{\frac{1}{2}\rho_\pi(s, a) + \frac{1}{2}\rho_E(s, a)}\right] + \mathbb{E}_{\rho_E}\left[\frac{1}{2}\log\frac{\rho_E(s, a)}{\frac{1}{2}\rho_\pi(s, a) + \frac{1}{2}\rho_E(s, a)}\right] \nonumber\\
  &= \mathbb{D}_{\operatorname{JS}}\left[\rho_\pi(s,a) || \rho_E(s,a)\right]\text{.}
  \end{align}
  \end{proof}
  \begin{theorem}[\autoref{theorem:1} with JS divergence] 
    \label{theorem:1_js}
    Given optimal expert policy $\pi_E$, $\rho_E(s,a)$, $\rho_E(s,s')$ denote its state-action and state transition occupancy measures accordingly, the optimization gap between minimizing the discrepancy of these two types of occupancy measures w.r.t. agent policy $\pi$ shows that
    \begin{align}
    \begin{split}\label{equ:idd_def_gap_js}
      \mathbb{D}_{\operatorname{JS}}\left(\rho_\pi(a|s, s') || \rho_E(a|s, s')\right) = \mathbb{D}_{\operatorname{JS}}\left(\rho_\pi(s,a) || \rho_E(s,a)\right) - \mathbb{D}_{\operatorname{JS}}\left(\rho_\pi(s,s') || \rho_E(s,s')\right) + \epsilon \text{,}
    \end{split}
    \end{align}
    where the minor term $\epsilon$ will converge to zero when minimizing the n\"{a}ive LfO objective under JS divergence $\mathbb{D}_{\operatorname{JS}}\left(\rho_\pi(s,s') || \rho_E(s,s')\right)$.
    \end{theorem}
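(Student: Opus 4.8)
The plan is to replay the proof of \autoref{theorem:1} with the Jensen--Shannon divergence written in its mixture form $\mathbb{D}_{\operatorname{JS}}(P||Q) = \tfrac12\,\mathbb{E}_{P}\!\left[\log\frac{2P}{P+Q}\right] + \tfrac12\,\mathbb{E}_{Q}\!\left[\log\frac{2Q}{P+Q}\right]$, and to collect into $\epsilon$ exactly the part of the computation where JS fails to obey the chain rule. First I would use \autoref{lemma:1_js} to replace $\mathbb{D}_{\operatorname{JS}}(\rho_\pi(s,a) || \rho_E(s,a))$ on the right-hand side by the joint-occupancy quantity $\mathbb{D}_{\operatorname{JS}}(\rho_\pi(s,a,s') || \rho_E(s,a,s'))$. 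It then suffices to establish
\begin{align*}
\mathbb{D}_{\operatorname{JS}}\!\left(\rho_\pi(s,a,s') || \rho_E(s,a,s')\right) - \mathbb{D}_{\operatorname{JS}}\!\left(\rho_\pi(s,s') || \rho_E(s,s')\right) = \mathbb{D}_{\operatorname{JS}}\!\left(\rho_\pi(a|s,s') || \rho_E(a|s,s')\right) + \epsilon .
\end{align*}

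Next I would expand the two JS terms on the left with the mixture form and factor every joint density as $\rho(s,a,s') = \rho(s,s')\,\rho(a|s,s')$. Subtracting the state-transition term cancels the ``$\rho(s,s')$'' contribution of the numerators and leaves integrals whose integrands are $\rho_\pi(s,a,s')$ or $\rho_E(s,a,s')$ times the logarithm of a ratio of the form
\begin{align*}
\frac{\rho_\pi(s,s')\rho_\pi(a|s,s') + \rho_E(s,s')\rho_E(a|s,s')}{\tfrac12\big(\rho_\pi(s,s')+\rho_E(s,s')\big)\big(\rho_\pi(a|s,s')+\rho_E(a|s,s')\big)} .
\end{align*}
When $\rho_\pi(s,s')\equiv\rho_E(s,s')$ this ratio is identically $1$, so the surviving integrals assemble exactly into $\mathbb{E}_{m(s,s')}\!\left[\mathbb{D}_{\operatorname{JS}}(\rho_\pi(a|s,s') || \rho_E(a|s,s'))\right]$, where $m$ is the mixture marginal on $(s,s')$; this is precisely the conditional object denoted $\mathbb{D}_{\operatorname{JS}}(\rho_\pi(a|s,s') || \rho_E(a|s,s'))$ in the statement. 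Accordingly I would \emph{define} $\epsilon$ to be the sum of the residual terms, each a density times the logarithm of the displayed ratio (plus, if one insists on averaging the conditional term against $\rho_\pi(s,s')$ rather than $m(s,s')$, the corresponding reweighting difference).

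Finally I would show $\epsilon\to 0$ whenever the na\"ive-LfO objective $\mathbb{D}_{\operatorname{JS}}(\rho_\pi(s,s') || \rho_E(s,s'))$ is driven to its minimum. Since $\mathbb{D}_{\operatorname{JS}}(P||Q)=0$ iff $P=Q$ and, quantitatively, $\mathbb{D}_{\operatorname{JS}}$ dominates a constant multiple of the squared total-variation distance, we get $\rho_\pi(s,s')\to\rho_E(s,s')$ in the limit; the displayed ratio then tends to $1$ almost everywhere on the common support, its logarithm tends to $0$, and a dominated-convergence argument (all densities are probability densities and the log-ratio stays bounded on the relevant support) yields $\epsilon\to 0$. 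In the realizable case the objective actually attains $0$, $\rho_\pi(s,s')=\rho_E(s,s')$ exactly, and $\epsilon=0$.

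I expect the last step to be the crux. Unlike KL, JS admits no exact chain rule, because $\tfrac12(\rho_\pi+\rho_E)$ for the joint occupancies does not factor as the product of a mixture on $(s,s')$ and a mixture on $a\,|\,(s,s')$; so one must (i) pin down the ``right'' conditional JS object, averaged against the mixture marginal, and (ii) control the residual uniformly so that pointwise convergence of the log-ratios transfers to the integral. The algebraic expansion and the cancellation of the state-transition term are otherwise routine and parallel the KL proof of \autoref{theorem:1} already given above.
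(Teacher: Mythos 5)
Your proposal follows essentially the same route as the paper's proof: lift $\mathbb{D}_{\operatorname{JS}}(\rho_\pi(s,a)\,||\,\rho_E(s,a))$ to the joint occupancy via \autoref{lemma:1_js}, expand both JS terms in mixture form, factor $\rho(s,a,s')=\rho(s,s')\rho(a|s,s')$ so the residual log-ratio (your displayed ratio is just the reciprocal of the paper's $f(s,a,s')\cdot\tfrac12(\rho_\pi(a|s,s')+\rho_E(a|s,s'))$) is collected into $\epsilon$, and argue $\epsilon\to 0$ as the transition-marginal JS vanishes. If anything you are more careful than the paper, which leaves the final limit step informal (and contains an undefined $g(s,a,s')$), whereas you make explicit the Pinsker-type bound, the mixture-marginal averaging of the conditional JS, and the dominated-convergence step needed to pass from pointwise convergence of the log-ratio to convergence of the integral.
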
 
    \begin{proof}
      We will begin with subtracting the Jensen-Shannon divergence between the state transition of expert and agent $\mathbb{D}_{\operatorname{JS}}(\rho_\pi(s,s') || \rho_E(s,s'))$ from the corresponding discrepancy over joint distribution $\mathbb{D}_{\operatorname{JS}}(\rho_\pi(s,a) || \rho_E(s,a))$ as
      \allowdisplaybreaks
      \begin{align}\label{equ:thm_1_js_eq1}
      &\mathbb{D}_{\operatorname{JS}}(\rho_\pi(s,a) || \rho_E(s,a)) - \mathbb{D}_{\operatorname{JS}}(\rho_\pi(s,s') || \rho_E(s,s')) \nonumber\\
      &=\underbrace{\mathbb{D}_{\operatorname{JS}}(\rho_\pi(s,a,s') || \rho_E(s,a,s')) - \mathbb{D}_{\operatorname{JS}}(\rho_\pi(s,s') || \rho_E(s,s'))}_{\text{by \autoref{lemma:1_js}}} \nonumber\\
      &= \int_{\mathcal{S} \times \mathcal{A} \times \mathcal{S}}\frac{1}{2}\rho_\pi(s,a,s')\log\left( \frac{\rho_\pi(s, a, s')}{\frac{1}{2}\left(\rho_\pi(s,a,s') + \rho_E(s,a,s')\right)} \times \frac{\frac{1}{2}\left(\rho_\pi(s,s') + \rho_E(s,s')\right)}{\rho_\pi(s,s')}\right)dsdads' \nonumber\\
      &~~~~+\int_{\mathcal{S} \times \mathcal{A} \times \mathcal{S}}\frac{1}{2}\rho_E(s,a,s')\log\left(\frac{\rho_E(s, a, s')}{\frac{1}{2}\left(\rho_\pi(s,a,s') + \rho_E(s,a,s')\right)} \times \frac{\frac{1}{2}\left(\rho_\pi(s,s') + \rho_E(s,s')\right)}{\rho_E(s,s')}\right)dsdads' \nonumber\\
      &= \int_{\mathcal{S} \times \mathcal{A} \times \mathcal{S}} \left(\frac{1}{2}\rho_\pi(s,a,s')\left(\log \rho_\pi(a|s,s') + \log f(s, a, s')\right) \right. \nonumber \\
      &~~~~+\left. \frac{1}{2}\rho_E(s,a,s')\left(\log \rho_E(a|s,s') + \log f(s, a, s')\right)\right)dsdads'\text{.}
      \end{align}
    We denotes~\eqref{equ:thm_1_js_eq1} as $\Gamma(s,a,s')$ and $f(s, a, s') = \frac{\rho_\pi(s,s') + \rho_E(s,s')}{\rho_\pi(s,a,s') + \rho_E(s,a,s')}$. We then expand the discrepancy between the inverse model of expert and agent as
    \allowdisplaybreaks
    \begin{align}
    &\mathbb{D}_{\operatorname{JS}}(\rho_\pi(a|s,s') || \rho_E(a|s,s')) \nonumber\\
    &= \int_{\mathcal{S} \times \mathcal{A} \times \mathcal{S}} \frac{1}{2}\rho_\pi(s,a,s')\log \frac{2\rho_\pi(a|s, s')}{\rho_\pi(a|s, s')+\rho_E(a|s, s')}dsdads' \nonumber\\
    &~~~~+ \int_{\mathcal{S} \times \mathcal{A} \times \mathcal{S}} \frac{1}{2}\rho_E(s,a,s')\log \frac{2\rho_E(a|s, s')}{\rho_\pi(a|s, s')+\rho_E(a|s, s')} dsdads'\nonumber \text{.}
    \end{align}
    When $\mathbb{D}_{\operatorname{JS}}(\rho_\pi(s,s') || \rho_E(s,s')) \rightarrow 0$, there will be $\frac{\rho_\pi(s,s')}{\rho_E(s,s')} \rightarrow 1$, and $g(s,a,s') \rightarrow 2$. Therefore $\Gamma(s,a,s') - \mathbb{D}_{\operatorname{JS}}(\rho_\pi(a|s,s') || \rho_E(a|s,s')) \rightarrow 0$. If we denote
    \allowdisplaybreaks
    \begin{align}
    \epsilon &= \Gamma(s,a,s') - \mathbb{D}_{\operatorname{JS}}(\rho_\pi(a|s,s') || \rho_E(a|s,s')) \nonumber\\
    &= \mathbb{D}_{\operatorname{JS}}(\rho_\pi(s,a) || \rho_E(s,a)) - \mathbb{D}_{\operatorname{JS}}(\rho_\pi(s,s') || \rho_E(s,s')) - \mathbb{D}_{\operatorname{JS}}(\rho_\pi(a|s,s') || \rho_E(a|s,s'))\nonumber \text{,}
    \end{align}
    we get $\epsilon \rightarrow 0$ during the minimization of $\mathbb{D}_{\operatorname{JS}}(\rho_\pi(s,s') || \rho_E(s,s'))$. 
    \end{proof}

    \begin{corollary}[\autoref{coro:1} with JS divergence]
      \label{coro:1_js}
      If the dynamics $\mathcal{T}(s'|s,a)$ is injective, LfD is equivalent to naive LfO (replacing KL with JS divergence).
      \begin{align}
          \mathbb{D}_{\operatorname{JS}}\left(\rho_\pi(s,a) || \rho_E(s,a)\right) = \mathbb{D}_{\operatorname{JS}}\left(\rho_\pi(s,s') || \rho_E(s,s')\right)\text{.}
      \end{align}
    \end{corollary}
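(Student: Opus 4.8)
The plan is to transcribe the proof of \autoref{coro:1} from the KL setting to the Jensen--Shannon setting, using \autoref{lemma:1_js} in place of \autoref{lemma:1} and leaving \autoref{lemma:degenerate} untouched, since that lemma concerns the inverse dynamics model itself and is independent of which divergence we measure with. First I would reduce the claim to the joint occupancy measure: by \autoref{lemma:1_js} we have $\mathbb{D}_{\operatorname{JS}}\left(\rho_\pi(s,a) || \rho_E(s,a)\right) = \mathbb{D}_{\operatorname{JS}}\left(\rho_\pi(s,a,s') || \rho_E(s,a,s')\right)$, so it suffices to show the right-hand side equals $\mathbb{D}_{\operatorname{JS}}\left(\rho_\pi(s,s') || \rho_E(s,s')\right)$.

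Next I would expand the joint measure by the chain rule, $\rho_\pi(s,a,s') = \rho_\pi(s,s')\,\rho_\pi(a|s,s')$ and likewise for the expert, and invoke \autoref{lemma:degenerate}: under injective dynamics the inverse model is a policy-independent degenerate conditional, $\rho_\pi(a|s,s') = \rho_E(a|s,s') =: p(a|s,s')$. Substituting into the two log-ratios that appear in the definition of $\mathbb{D}_{\operatorname{JS}}$, the common factor $p(a|s,s')$ cancels between the numerator and the mixture denominator $\tfrac{1}{2}\big(\rho_\pi(s,a,s')+\rho_E(s,a,s')\big) = \tfrac{1}{2}\big(\rho_\pi(s,s')+\rho_E(s,s')\big)\,p(a|s,s')$, so that
\[
\frac{\rho_\pi(s,a,s')}{\tfrac{1}{2}\big(\rho_\pi(s,a,s')+\rho_E(s,a,s')\big)} = \frac{\rho_\pi(s,s')}{\tfrac{1}{2}\big(\rho_\pi(s,s')+\rho_E(s,s')\big)},
\]
and symmetrically with $\rho_E$ replacing $\rho_\pi$. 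The integrand of each of the two terms of $\mathbb{D}_{\operatorname{JS}}\left(\rho_\pi(s,a,s') || \rho_E(s,a,s')\right)$ now depends on $a$ only through the weight $\rho_\pi(s,a,s')$ (resp.\ $\rho_E(s,a,s')$); integrating $a$ out collapses that weight to $\rho_\pi(s,s')$ (resp.\ $\rho_E(s,s')$) by marginalization, and the two remaining integrals reassemble precisely into $\mathbb{D}_{\operatorname{JS}}\left(\rho_\pi(s,s') || \rho_E(s,s')\right)$, which is the claim.

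A shorter alternative is to start from \autoref{theorem:1_js}: \autoref{lemma:degenerate} makes the term $\mathbb{D}_{\operatorname{JS}}\left(\rho_\pi(a|s,s') || \rho_E(a|s,s')\right)$ on its left-hand side vanish, so the corollary becomes equivalent to showing that the residual $\epsilon$ in that theorem is identically zero once the dynamics is injective; this in turn reduces to exactly the same cancellation of $p(a|s,s')$. I would therefore present the direct argument above and note this equivalence as a remark.

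I expect the main obstacle to be the measure-theoretic bookkeeping when $p(a|s,s')$ is a Dirac delta, i.e.\ when neither $\rho_\pi$ nor $\rho_E$ is absolutely continuous in the $a$-direction: one must justify pulling $p(a|s,s')$ out of both numerator and denominator of the mixture ratios and then integrating it against unity. This is the same subtlety already handled in \autoref{lemma:degenerate} and flagged by the finite-bit-precision footnote attached to \eqref{equ:MI}, so I would route the argument through \autoref{lemma:degenerate} (and, if needed, the discretized dynamics) rather than manipulate $\delta$ symbolically. No new inequalities or estimates are required, since $\mathbb{D}_{\operatorname{JS}}$ is automatically finite; the entire proof is the JS transcription of the algebra in the proof of \autoref{coro:1}.
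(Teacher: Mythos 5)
Your proposal is correct and follows essentially the same route as the paper's own proof: reduce to the joint occupancy measure via \autoref{lemma:1_js}, factor $\rho(s,a,s')=\rho(s,s')\rho(a|s,s')$, cancel the common policy-independent inverse model from \autoref{lemma:degenerate} in both the numerator and the mixture denominator, and marginalize out $a$. The extra remarks on the alternative route through \autoref{theorem:1_js} and on the Dirac-delta bookkeeping are sensible additions but do not change the argument.
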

    \begin{proof}
    With \autoref{lemma:1_js}, we can substitute the right side of the equality as
    \allowdisplaybreaks
    \begin{align}
        &\mathbb{D}_{\operatorname{JS}}(\rho_\pi(s,a) || \rho_E(s,a)) \nonumber\\
        &= \mathbb{D}_{\operatorname{JS}}(\rho_\pi(s,a,s') || \rho_E(s,a,s')) \nonumber \\
        &= \mathbb{E}_{\rho_\pi}\left[\frac{1}{2}\log \frac{2\rho_\pi(s,a,s')}{\rho_\pi(s,a,s') + \rho_E(s,a,s')}\right] + \mathbb{E}_{\rho_E}\left[\frac{1}{2}\log \frac{2\rho_E(s,a,s')}{\rho_\pi(s,a,s') + \rho_E(s,a,s')}\right] \nonumber \\
        &= \mathbb{E}_{\rho_\pi}\left[\frac{1}{2}\log \frac{2\rho_\pi(s,s')\rho_\pi(a|s,s')}{\rho_\pi(s,s')\rho_\pi(a|s,s') + \rho_E(s,s')\rho_E(a|s,s')}\right] \nonumber \\
        &~~~~+ \mathbb{E}_{\rho_E}\left[\frac{1}{2}\log \frac{2\rho_E(s,s')\rho_E(a|s,s')}{\rho_\pi(s,s')\rho_\pi(a|s,s') + \rho_E(s,s')\rho_E(a|s,s')}\right] \nonumber \\
        &= \underbrace{\mathbb{E}_{\rho_\pi}\left[\frac{1}{2}\log \frac{2\rho_\pi(s,s')}{\rho_\pi(s,s') + \rho_E(s,s')}\right] + \mathbb{E}_{\rho_E}\left[\frac{1}{2}\log \frac{2\rho_E(s,s')}{\rho_\pi(s,s') + \rho_E(s,s')}\right]}_{\text{by \autoref{lemma:degenerate}}} \nonumber \\
        &= \mathbb{D}_{\operatorname{JS}}(\rho_\pi(s,s') || \rho_E(s,s')) \text{.}
    \end{align}
    \end{proof}

\subsection{Theorem 2 with Jensen-Shannon Divergence}
\begin{assumption}\label{assumption:2}
  Given the inverse dynamics model of agent $\rho_\pi(a|s,s')$ and expert $\rho_E(a|s,s')$. The following inequality
  \allowdisplaybreaks
  \begin{align}
  \mathbb{D}_{\operatorname{KL}}(\rho_E(a|s,s')||\rho_\pi(a|s,s')) \leqslant \mathbb{D}_{\operatorname{KL}}(\rho_\pi(a|s,s')||\rho_E(a|s,s')) + \delta\text{,}
  \end{align}
  where $\delta$ is a minor term that will converge to 0 and thus can be omitted during the minimization the inverse dynamics disagreement between $\rho_\pi(a|s,s')$ and $\rho_E(a|s,s')$, should always holds, or the reverse Kullback-Leibler divergence of the inverse model between agent and expert should be bounded by the KL divergence between them.
  \end{assumption}
  \noindent
  Note that, this assumption is somewhat trivial since when KL divergence $\mathbb{D}_{\operatorname{KL}}(\rho_\pi(a|s,s')||\rho_E(a|s,s'))$ is sufficiently minimized, the total variance between $\rho_\pi(a|s,s')$ and $\rho_E(a|s,s')$ is also minimized, thus inverse $\mathbb{D}_{\operatorname{KL}}(\rho_E(a|s,s')||\rho_\pi(a|s,s'))$ will be minimized at the same time. Apparently, the inequality holds and there will be $\delta \rightarrow 0$.
  
\begin{theorem}[\autoref{theorem:2} with JS divergence]\label{theorem:2_js}
Let $\mathcal{H}_\pi(s, a)$ and $\mathcal{H}_E(s, a)$ denote the causal entropies over the state-action occupancy measures of the agent and expert, respectively. When $\mathbb{D}_{\operatorname{KL}}\left[\rho_\pi(s,s') || \rho_E(s,s')\right]$ is minimized, we have  
\begin{align}
  \begin{split}
      \mathbb{D}_{\operatorname{JS}}\left[\rho_\pi(a|s,s') || \rho_E(a|s,s')\right] 
      & \leqslant -\mathcal{H}_\pi(s, a) + \text{Const}\text{.}
  \end{split}
  \end{align}
\end{theorem}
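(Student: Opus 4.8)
The plan is to reduce this Jensen--Shannon statement to the KL version of \autoref{theorem:2} already established, by first bounding the \emph{symmetric} JS divergence between the two inverse dynamics models by the one-sided KL divergence $\mathbb{D}_{\operatorname{KL}}(\rho_\pi(a|s,s') || \rho_E(a|s,s'))$ that we know how to control, and then invoking the KL result under the same hypothesis. The only genuinely new ingredient is a device for passing from the symmetric JS quantity to an asymmetric KL quantity, and this is exactly what \autoref{assumption:2} is designed to supply.

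First I would record the elementary fact that, for any $P,Q$ with $M=\tfrac12(P+Q)$, concavity of $\log$ (so that $\log m(x)\geqslant\tfrac12\log p(x)+\tfrac12\log q(x)$ pointwise) gives $\mathbb{D}_{\operatorname{KL}}(P || M)\leqslant\tfrac12\mathbb{D}_{\operatorname{KL}}(P || Q)$ and, symmetrically, $\mathbb{D}_{\operatorname{KL}}(Q || M)\leqslant\tfrac12\mathbb{D}_{\operatorname{KL}}(Q || P)$, whence $\mathbb{D}_{\operatorname{JS}}(P || Q)=\tfrac12\mathbb{D}_{\operatorname{KL}}(P || M)+\tfrac12\mathbb{D}_{\operatorname{KL}}(Q || M)\leqslant\tfrac14\big(\mathbb{D}_{\operatorname{KL}}(P || Q)+\mathbb{D}_{\operatorname{KL}}(Q || P)\big)$. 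I will retain only the weaker form $\mathbb{D}_{\operatorname{JS}}(P || Q)\leqslant\tfrac12\big(\mathbb{D}_{\operatorname{KL}}(P || Q)+\mathbb{D}_{\operatorname{KL}}(Q || P)\big)$ and apply it with $P=\rho_\pi(a|s,s')$, $Q=\rho_E(a|s,s')$, turning the left-hand side of the theorem into $\tfrac12\mathbb{D}_{\operatorname{KL}}(\rho_\pi(a|s,s') || \rho_E(a|s,s'))+\tfrac12\mathbb{D}_{\operatorname{KL}}(\rho_E(a|s,s') || \rho_\pi(a|s,s'))$.

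Next I would eliminate the reverse KL term using \autoref{assumption:2}, namely $\mathbb{D}_{\operatorname{KL}}(\rho_E(a|s,s') || \rho_\pi(a|s,s'))\leqslant\mathbb{D}_{\operatorname{KL}}(\rho_\pi(a|s,s') || \rho_E(a|s,s'))+\delta$ with $\delta\to 0$ as the naive-LfO objective is minimized, which yields $\mathbb{D}_{\operatorname{JS}}(\rho_\pi(a|s,s') || \rho_E(a|s,s'))\leqslant\mathbb{D}_{\operatorname{KL}}(\rho_\pi(a|s,s') || \rho_E(a|s,s'))+\tfrac{\delta}{2}$. Then, under the standing hypothesis that $\mathbb{D}_{\operatorname{KL}}[\rho_\pi(s,s') || \rho_E(s,s')]$ is minimized, the KL version of \autoref{theorem:2} gives $\mathbb{D}_{\operatorname{KL}}(\rho_\pi(a|s,s') || \rho_E(a|s,s'))\leqslant-\mathcal{H}_\pi(s,a)+\text{Const}$, and absorbing the vanishing $\tfrac{\delta}{2}$ into the constant finishes the proof. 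Note the deliberate use of the loose $\tfrac12$-factor inequality instead of the tight $\tfrac14$-factor one: this is precisely so that the factor $2$ produced by \autoref{assumption:2} cancels and the coefficient on $\mathcal{H}_\pi(s,a)$ comes out exactly $1$, matching the statement; the tighter bound would leave a spurious $\tfrac12$ in front of the entropy.

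The main obstacle — and the least self-contained step — is the appeal to \autoref{assumption:2}: replacing the symmetric JS divergence by a single asymmetric KL term is legitimate only because the reverse KL is dominated by the forward one up to a residual $\delta$ that vanishes \emph{along the same optimization} that we use for the forward bound, so one must check that $\delta\to 0$ is driven by $\mathbb{D}_{\operatorname{KL}}[\rho_\pi(s,s') || \rho_E(s,s')]\to 0$ (equivalently by closeness of the inverse models) and that both the forward term and the $\delta$-term can therefore be controlled simultaneously. A secondary point to verify is that the hypothesis required by the KL version of \autoref{theorem:2} (minimization of the state-transition divergence) is literally the hypothesis assumed here, so that no extra assumption is introduced when invoking it.
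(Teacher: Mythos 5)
Your proof is correct and follows essentially the same route as the paper's: bound $\mathbb{D}_{\operatorname{JS}}$ by $\tfrac12\bigl(\mathbb{D}_{\operatorname{KL}}(\rho_\pi\Vert\rho_E)+\mathbb{D}_{\operatorname{KL}}(\rho_E\Vert\rho_\pi)\bigr)$, use \autoref{assumption:2} to replace the reverse KL by the forward KL up to a vanishing $\delta$, and then invoke the KL version of \autoref{theorem:2} under the same hypothesis. The only (immaterial) difference is how the first inequality is obtained --- you use concavity of $\log$ applied to the mixture $M=\tfrac12(P+Q)$, whereas the paper expands the JS integral and drops the ``$1+$'' inside the logarithm, incurring an extra additive $\log 2$ that is absorbed into the constant.
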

\begin{proof}
  We will begin with the gap as the discrepancy between the inverse model of agent and expert
  \allowdisplaybreaks
  \begin{align}\label{equ:final_obj_part}
  &\mathbb{D}_{\operatorname{JS}}\left[\rho_\pi(a|s,s') || \rho_E(a|s,s')\right] \nonumber\\
  &= \int_{\mathcal{S}\times\mathcal{A}\times\mathcal{S}}\frac{1}{2}\rho_\pi(s,a,s')\log \frac{2}{1+\frac{\rho_E(a|s,s')}{\rho_\pi(a|s,s')}}dsdads' \nonumber\\
  &~~~~+ \int_{\mathcal{S}\times\mathcal{A}\times\mathcal{S}}\frac{1}{2}\rho_E(s,a,s')\log \frac{2}{1+\frac{\rho_\pi(a|s,s')}{\rho_E(a|s,s')}}dsdads' \nonumber\\
  &= \log2 - \int_{\mathcal{S}\times\mathcal{A}\times\mathcal{S}}\frac{1}{2}\rho_\pi(s,a,s')\log \left(1 + \frac{\rho_E(a|s,s')}{\rho_\pi(a|s,s')}\right)dsdads' \nonumber\\
  &~~~~- \int_{\mathcal{S}\times\mathcal{A}\times\mathcal{S}}\frac{1}{2}\rho_E(s,a,s')\log \left(1 + \frac{\rho_\pi(a|s,s')}{\rho_E(a|s,s')}\right)dsdads'\nonumber \\
  &\leqslant \log2 + \int_{\mathcal{S}\times\mathcal{A}\times\mathcal{S}}\frac{1}{2}\rho_\pi(s,a,s')\log\frac{\rho_\pi(a|s,s')}{\rho_E(a|s,s')}dsdads' \nonumber\\
  &~~~~+ \int_{\mathcal{S}\times\mathcal{A}\times\mathcal{S}}\frac{1}{2}\rho_E(s,a,s')\log\frac{\rho_E(a|s,s')}{\rho_\pi(a|s,s')}dsdads' \nonumber\\
  &= \frac{1}{2}\mathbb{D}_{\operatorname{KL}}\left(\rho_\pi(a|s,s') || \rho_E(a|s,s')\right) + \frac{1}{2}\mathbb{D}_{\operatorname{KL}}\left(\rho_E(a|s,s') || \rho_\pi(a|s,s')\right) + \log2 \nonumber\\
  &= \underbrace{\mathbb{D}_{\operatorname{KL}}\left(\rho_\pi(a|s,s') || \rho_E(a|s,s')\right) + \delta}_{\text{by \autoref{assumption:2}}} +\log2\nonumber\\
  &\leqslant  \underbrace{-\mathcal{H}_\pi(s,a) + \textit{Const}}_{\text{by \autoref{theorem:2}}}\text{,}
  \end{align}
  where the first inequality is given by Jensen's inequality, and as we demonstrated in \autoref{assumption:2}, the minor error $\delta$ will converge to 0 and thus can be omitted from the objective of minimizing $\mathbb{D}_{\operatorname{KL}}\left(\rho_\pi(a|s,s') || \rho_E(a|s,s')\right)$.
\end{proof}  

\subsection{Gradient Estimation of Policy Entropy and Mutual Information}
Here we provide the policy gradient formula for causal entropy $\mathcal{H}_\pi(a|s)$ and mutual information $\mathcal{I}(s;(s', a))$.
\begin{proposition}\label{proposition:pg_policy}
The policy gradient of causal entropy $\mathcal{H}_\pi(a|s)$ is given by
\begin{align}\label{equ:pg_policy}
\begin{split}
    \nabla_\theta\mathbb{E}_{\pi_\theta}\left[-\log \pi_\theta(a|s)\right] = \mathbb{E}_{\theta}\left[\nabla_\theta \log \pi_\theta(a|s)Q^H(s, a)\right]\text{,} \\
\text{where~~} Q^H(\bar{s},\bar{a}) = \mathbb{E}_{\pi_\theta}\left[-\log\pi_\theta(a|s) | s_0 = \bar{s}, a_0=\bar{a}\right]\text{.}
\end{split}
\end{align}
\begin{proof}
Define $\rho(s) = \sum_a \rho(s,a)$ as the state occupancy measure. Then we have
\allowdisplaybreaks
\begin{align}
    \nabla_\theta \mathbb{E}_{\pi_\theta}\left[-\log \pi_\theta(a|s)\right] &= -\nabla_\theta \sum_{s,a}\rho_{\pi_\theta}(s,a)\log \pi_\theta(a|s) \nonumber\\
    & =  - \sum_{s,a}(\nabla_{\theta}\rho_{\pi_{\theta}}(s,a)) \log \pi_{\theta}(a|s) - \sum_{s} \rho_{\pi_{\theta}}(s)\sum_{a}\pi_{\theta}(a|s)\nabla_{\theta}\log\pi_{\theta}(a |s) \nonumber\\
    & = - \sum_{s,a}(\nabla_{\theta}\rho_{\pi_\theta}(s,a))\log\pi_{\theta}(a|s)-\underbrace{\sum_{s}\rho_{\pi_\theta}(s)\sum_{a}\nabla_{\theta}\pi_{\theta}(a|s)}_{=0} \nonumber \\
    & = - \sum_{s,a}(\nabla_{\theta}\rho_{\pi_\theta}(s,a))\log\pi_{\theta}(a|s)\text{.}
\end{align}
This is exactly the policy gradient for RL with fixed cost function $c(s,a) = \log \pi_\theta(a|s)$. And the resulting policy gradient~\eqref{equ:pg_policy} is given by the standard policy gradient with cost $c(s,a)$.
\end{proof}
\end{proposition}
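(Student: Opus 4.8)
The plan is to recognize this as a direct application of the standard policy gradient theorem to the objective $J(\theta) := \mathcal{H}_{\pi_\theta}(a|s) = -\sum_{s,a}\rho_{\pi_\theta}(s,a)\log\pi_\theta(a|s)$, with a per-step cost $c(s,a) = -\log\pi_\theta(a|s)$ that is \emph{frozen} at the current parameters. The one subtlety is that the cost itself depends on $\theta$, so the argument must carefully separate the two sources of $\theta$-dependence before invoking the theorem.

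First I would write the objective explicitly using the factorization $\rho_\pi(s,a)=\rho_\pi(s)\pi(a|s)$ from \autoref{tab:definition}, and apply the product rule to $\nabla_\theta J$. This splits the gradient into two pieces: one in which $\nabla_\theta$ differentiates the occupancy measure $\rho_{\pi_\theta}(s,a)$, and one in which it differentiates the explicit $\log\pi_\theta(a|s)$ factor.

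Next I would dispose of the second piece. Rewriting $\sum_{s,a}\rho_{\pi_\theta}(s,a)\nabla_\theta\log\pi_\theta(a|s) = \sum_{s}\rho_{\pi_\theta}(s)\sum_{a}\pi_\theta(a|s)\nabla_\theta\log\pi_\theta(a|s) = \sum_{s}\rho_{\pi_\theta}(s)\sum_{a}\nabla_\theta\pi_\theta(a|s)$ and using $\sum_a\pi_\theta(a|s)\equiv 1 \Rightarrow \sum_a\nabla_\theta\pi_\theta(a|s)=0$ shows this term vanishes. What remains is $\nabla_\theta J = -\sum_{s,a}\bigl(\nabla_\theta\rho_{\pi_\theta}(s,a)\bigr)\log\pi_\theta(a|s)$, i.e. exactly $\nabla_\theta\sum_{s,a}\rho_{\pi_\theta}(s,a)\,c(s,a)$ with $c(s,a)=-\log\pi_\theta(a|s)$ treated as a constant function. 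Applying the standard policy gradient theorem for the $\gamma$-discounted occupancy measure then converts this into $\mathbb{E}_{\pi_\theta}\bigl[\nabla_\theta\log\pi_\theta(a|s)\,Q^H(s,a)\bigr]$, where $Q^H(\bar s,\bar a)$ is the discounted expected cumulative cost started from $(\bar s,\bar a)$, which is precisely $\mathbb{E}_{\pi_\theta}[-\log\pi_\theta(a|s)\mid s_0=\bar s,\,a_0=\bar a]$ as stated.

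The main obstacle is the legitimacy of freezing $c$: the policy gradient theorem in its usual form assumes a reward independent of $\theta$, whereas here $c(s,a)=-\log\pi_\theta(a|s)$ manifestly is not. The resolution, which I would emphasize, is that the term carrying the explicit $\theta$-dependence of $c$ is exactly the piece shown to be zero above, so after that cancellation only the trajectory-distribution dependence is differentiated and the theorem applies verbatim with $c$ evaluated at the current $\theta$. A minor additional point worth stating is consistency of the discount factor: since $\rho_{\pi_\theta}$ is the $\gamma$-discounted occupancy measure, $Q^H$ must be the correspondingly $\gamma$-discounted sum of costs, which matches the conditional-expectation form given in the proposition.
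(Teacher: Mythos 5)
Your proposal matches the paper's proof essentially step for step: split the gradient by the product rule, kill the explicit-cost term via $\sum_a \nabla_\theta\pi_\theta(a|s)=0$, and read off the remainder as the standard policy gradient with the frozen cost $c(s,a)=\log\pi_\theta(a|s)$. Your added remarks on why freezing the cost is legitimate and on matching the discount in $Q^H$ are correct clarifications of the same argument, not a different route.
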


\begin{proposition}\label{proposition:pg_state}
The policy gradient of mutual information $\mathcal{I}(s;(s', a))$ is given by
\begin{align}\label{equ:pg_state}
\begin{split}
    \nabla_\theta\mathcal{I}_{\pi_\theta}(s;(s', a)) = \mathbb{E}_{\theta}\left[\nabla_\theta \log \pi_\theta(a|s)Q^I(s, a)\right]\text{,} \\
\text{where~~} Q^I(\bar{s},\bar{a}) = \mathcal{I}_{\pi_\theta}\left(s;(s', a)| s_0 = \bar{s}, a_0=\bar{a}\right)\text{.}
\end{split}
\end{align}
\begin{proof}
Note that $\mathcal{I}_\pi(s;(s',a)) = \mathcal{H}_\pi(s) - \mathcal{H}_\pi(s|s',a)$. The same as the proof for \autoref{proposition:pg_policy}, we have
\allowdisplaybreaks
\begin{align}
    \nabla_\theta\mathcal{I}_{\pi_\theta}(s;(s', a)) &= -\nabla_\theta \sum_{s}\rho_{\pi_\theta}(s)\log \pi_\theta(s) - \nabla_\theta \sum_{s, a, s'}\rho_{\pi_\theta}(s, a, s')\log \pi_\theta(s|s', a) \nonumber\\
    & = - \sum_{s}(\nabla_{\theta}\rho_{\pi_\theta}(s))\log\pi_{\theta}(s) - \sum_{s,a,s'}(\nabla_{\theta}\rho_{\pi_\theta}(s,a,s'))\log\pi_{\theta}(s|s', a)\text{.} 
\end{align}
We can thus see $\mathcal{I}_\pi(s;(s',a))$ as a fixed cumulative cost sum of a MDP, thus the resulting policy gradient will be~\eqref{equ:pg_state}.
\end{proof}
\end{proposition}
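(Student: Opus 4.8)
The plan is to mirror the argument for \autoref{proposition:pg_policy}: I would rewrite the scalar $\mathcal{I}_{\pi_\theta}(s;(s',a))$ as the expectation of a \emph{frozen} per-step cost against the joint occupancy measure, show that the contributions from differentiating the occupancy-dependent cost vanish by normalization, and then invoke the standard policy gradient theorem. Concretely, first I would use $\mathcal{I}_\pi(s;(s',a)) = \mathcal{H}_\pi(s) - \mathcal{H}_\pi(s|s',a)$ and write each entropy as a sum against an occupancy measure,
\[
\mathcal{H}_\pi(s) = -\sum_s \rho_\pi(s)\log\rho_\pi(s), \qquad \mathcal{H}_\pi(s|s',a) = -\sum_{s,a,s'}\rho_\pi(s,a,s')\log\rho_\pi(s|s',a).
\]
Using $\rho_\pi(s) = \sum_{a,s'}\rho_\pi(s,a,s')$, this exhibits the mutual information as $\mathcal{I}_\pi = \sum_{s,a,s'}\rho_\pi(s,a,s')\,c(s,a,s')$ with the per-step cost $c(s,a,s') = -\log\rho_\pi(s) + \log\rho_\pi(s|s',a)$, i.e.\ as a discounted cumulative cost of the same MDP.

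Next I would differentiate. By the product rule each entropy splits into a term that differentiates the outer occupancy factor and a term that differentiates the inner $\log\rho_\pi$ factor. The second (``self-differentiation'') terms are exactly where the argument could break, but they vanish by the same normalization trick used for the policy entropy: for the state entropy, $\sum_s\rho_\pi(s)\nabla_\theta\log\rho_\pi(s)=\sum_s\nabla_\theta\rho_\pi(s)=\nabla_\theta\sum_s\rho_\pi(s)=\nabla_\theta\tfrac{1}{1-\gamma}=0$, since the total mass of $\rho_\pi$ is a $\theta$-independent constant; for the conditional entropy, using $\rho_\pi(s,a,s')/\rho_\pi(s|s',a)=\rho_\pi(s',a)$ gives $\sum_{s,a,s'}\rho_\pi(s',a)\nabla_\theta\rho_\pi(s|s',a)=\sum_{s',a}\rho_\pi(s',a)\nabla_\theta\big(\sum_s\rho_\pi(s|s',a)\big)=0$ because the conditional normalizes to one over $s$. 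Hence only the occupancy-gradient terms survive, yielding $\nabla_\theta\mathcal{I}_\pi = \sum_{s,a,s'}(\nabla_\theta\rho_\pi(s,a,s'))\,c(s,a,s')$ with $c$ frozen at the current $\theta$.

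Finally I would treat $c$ as a fixed cost of the MDP and apply the policy gradient theorem exactly as in \autoref{proposition:pg_policy}, marginalizing $s'\sim\mathcal{T}(s'|s,a)$ to reduce the $(s,a,s')$-cost to a standard $(s,a)$-cost (legitimate because $\mathcal{T}$ is independent of $\theta$). This produces $\nabla_\theta\mathcal{I}_{\pi_\theta}(s;(s',a)) = \mathbb{E}_\theta[\nabla_\theta\log\pi_\theta(a|s)Q^I(s,a)]$, where $Q^I(\bar s,\bar a)$ is the cost-to-go of $c$ from $(\bar s,\bar a)$, which is precisely the conditional mutual information $\mathcal{I}_{\pi_\theta}(s;(s',a)\mid s_0=\bar s,a_0=\bar a)$. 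The main obstacle is the self-differentiation step of the previous paragraph: it is what legitimizes freezing the log-occupancy cost and reusing the policy gradient theorem, and the two normalization identities (for $\rho_\pi(s)$ and for the conditional $\rho_\pi(s|s',a)$) are the only nontrivial ingredient; everything else is bookkeeping analogous to the policy-entropy case, and the conditional term even drops out entirely under the deterministic-dynamics assumption where $\mathcal{H}_\pi(s|s',a)=0$.
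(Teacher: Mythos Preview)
Your proposal is correct and follows the same route as the paper: decompose $\mathcal{I}_\pi=\mathcal{H}_\pi(s)-\mathcal{H}_\pi(s|s',a)$, drop the self-differentiation terms via normalization, and then treat the surviving expression as a frozen per-step cost so that the standard policy gradient theorem yields~\eqref{equ:pg_state}. You are in fact more explicit than the paper about the two normalization identities (the paper merely gestures to the proof of \autoref{proposition:pg_policy}, and its displayed equation even carries a sign slip in the second term).
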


\newpage
\section{Specifications}
\subsection{Hyperparameters}
\autoref{tab:alg_param} lists the parameters for BCO~\cite{bco}, DeepMimic~\cite{deepmimic}, GAIL~\cite{GAIL}, GAIfO~\cite{gaifo} and proposed method used in the comparative evaluation.

\begin{table}[H]
\renewcommand{\arraystretch}{1.1}
\centering
\caption{Hyperparameters for Evaluated Algorithms} 
\label{tab:shared_params}
\vspace{1mm}
\begin{tabular}{l l| l }
\toprule
\multicolumn{2}{l|}{Parameter} &  Value\\
\midrule
\multicolumn{2}{l|}{\it{Shared}}& \\
& Optimizer &Adam \citep{adam}\\
& Learning rate & $3e^{-4}$\\
& Batch size & 512 \\
& Discount ($\gamma$) &  0.99\\
& Architecture of policy, value and discriminator networks & (300, 400)\\
& Nonlinearity & Tanh\\
\midrule
\multicolumn{2}{l|}{\it{BCO}}& \\
& Inverse model training epoches& 50 \\
\midrule
\multicolumn{2}{l|}{\it{DeepMimic}}& \\
& Reward type & Joint angle $q_t$ and Joint velocity $\dot{q}_t$\\
& Reward design &\makecell[tl]{\small{$r_t = w_p\exp(-2(\sum_j \|\hat{q}_t^j-q_t^j\|_2))$}\\\small{$\quad \ +\ w_v\exp(-0.1(\sum_j \|\hat{\dot{q}}_t^j-\dot{q}_t^j\|_2))$}} \\
& Reward weight ($w_p$,$w_v$) & (0.8,0.2) \\
\midrule
\multicolumn{2}{l|}{\it{GAIL}}& \\
& Weight of policy entropy & 0.01 \\
\midrule
\multicolumn{2}{l|}{\it{Ours}}& \\
& Weight of policy entropy ($\lambda_p$)  & 0.01 \\
& Weight of state entropy  ($\lambda_s$) & 0.1 \\
& Pretrained MI estimator steps & 10000 \\
& Update MI estimator steps & 50 \\
& Architecture of MI estimator network & (512, 512) \\
\bottomrule
\label{tab:alg_param}
\end{tabular}
\end{table}

\subsection{Gridworld Environment and Inverse Dynamics Disagreement}
We will first demonstrate how our Gridworld environments are motivated by illustrating the relation between inverse dynamics disagreement and possible functional-equivalent action choices. Then we will provide the detailed specifications of our Gridworld environment.

The intuition behind the design of these experiments is that the complexity of the dynamics shows positive correlation with inverse dynamics disagreement. Under the deterministic dynamics, the complexity will be mainly dominated by the numbers of action choices (or size of state space, but we override it as we adopt a fixed size maze). Consider a MDP with two state $s_0$, $s_1$ and a set of actions $\{a_{01}^{(0)},a_{01}^{(1)},a_{01}^{(2)}, \cdots \}$ that can let the agent transform from $s_0$ to $s_1$. To approximately compute inverse dynamics disagreement, we denote $\pi(a|s=s_0) \thicksim \operatorname{Categorical}(p_1 = p_2 = \cdots = p_k)$ is a uniformed initialized policy on a discrete action space with size $k$, $\pi_\theta(a|s=s_0)$ is a $\theta$-parameterized expert policy. Without loss of generality, we assume $\theta$ has a prior of normal distribution $\theta \thicksim \mathcal{N}(\mathbf{0}^k, \mathbf{1}^k)$. Therefore, we can approximately compute inverse dynamics disagreement as follows.
\allowdisplaybreaks
\begin{align}\label{equ:approx_idd}
    &\textit{Inverse Dynamics Disagreement}  \nonumber \\
    &\approx \mathbb{E}_{\theta \thicksim \mathcal{N}(\mathbf{0}^k, \mathbf{1}^k)}\left[\mathbb{D}_{\operatorname{KL}}\left(\pi(a|s=s0)p(s=s0) || \pi_\theta(a|s=s0)p(s=s0)\right)\right]\text{,}
\end{align}
where $p(s=s_0)$ is the distribution of state. Since there is only two states available, $p(s=s_0) = 1$. Fig. 1a in the main paper are plotted with~\eqref{equ:approx_idd}. As we can see, inverse dynamics disagreement does show a growing trend as the number of possible action choices increases.

To this end, we design several simple Gridworld environments (see \autoref{fig:gridworld}) to help understand how inverse dynamics disagreement affects the imitation learning algorithms. The red block is the starting point of the agent, while the agent is encouraged to move toward the target green block. All the black and dark grey block are permitted to move through, while the grey block represents wall. The action that the agent may conduct including four basic ones: \textit{moving left}, \textit{moving right}, \textit{moving up}, \textit{moving down} when the number of possible action choices is one. If the number of possible action choices is larger than one (\emph{e.g.} $n$ choices), there will be $n-1$ functional equivalent choices added to each original moving action, \emph{i.e.}, now there will be $n$ action choices for moving left/right/up/down. For the reward strategy, once the agent successfully reaches the green target block, it will receive a reward of $100$, and the game will immediately come to an end. When the agent takes an original moving action, it will receive a penalty of $-1$, but when the agent chooses an action choice that is other than the original one, it will receive a penalty of $-5$. All the numerical evaluation results are under this strategy.

\begin{figure*}[h]
\begin{center}
\includegraphics[width=12cm]{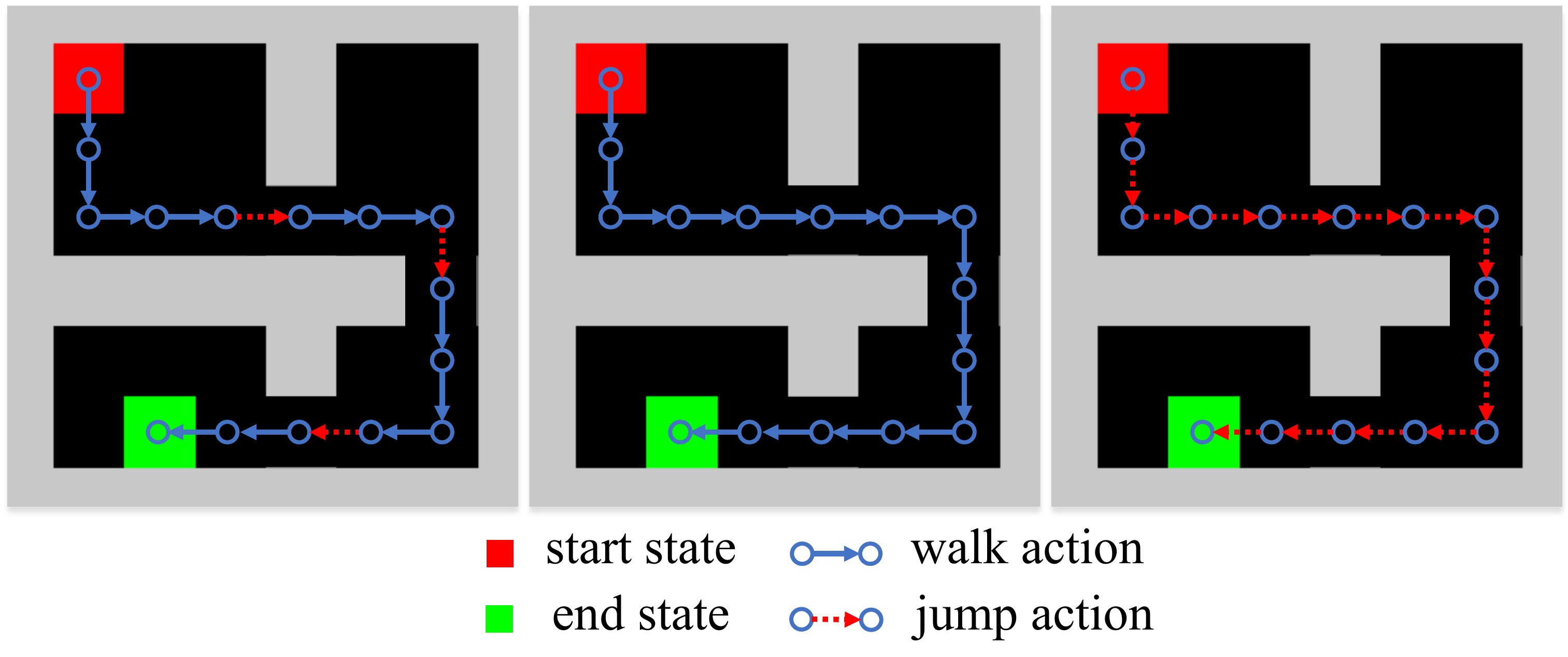}
\caption{Gridworld environment.}
\label{fig:gridworld_new}
\end{center}
\end{figure*}

\subsection{Other Environments}
\autoref{tab:env_param} lists the specifications about the benchmark environments and number of state transition pairs (state-action pairs for GAIL) in demonstration for each environment.
\begin{table}[H]
\renewcommand{\arraystretch}{1.1}
\centering
\caption{Specifications for Evaluated Environments}
\label{tab:env_param}
\vspace{1mm}
\begin{tabular}{ c c c c c c c}
\toprule
Environment         &$\mathcal{S}$  &$\mathcal{A}$ &Max-Step &Demonstration Size \\
\midrule
CartPole             & $\mathbb{R}^4$  & $\{0,1\}$        &200 & 5000\\
Pendulum         &$\mathbb{R}^4$ &$\mathbb{R}^1$        & 1000 & 50000\\
DoublePendulum                 &$\mathbb{R}^{11}$ &$\mathbb{R}^1$    &1000 & 50000\\
Hopper     &$\mathbb{R}^{11}$ &$\mathbb{R}^3$     &1000  & 50000\\
Halfcheetah         &$\mathbb{R}^{17}$ &$\mathbb{R}^6$     &1000 & 50000\\
Ant              &$\mathbb{R}^{111}$ &$\mathbb{R}^8$ &1000 & 50000\\
\bottomrule
\end{tabular}
\end{table}

\section{Additional Empirical Results}
\subsection{Quantitative Results of Toy Example}
\autoref{tab:eval_toy} lists the quantitative results of the toy Gridworld experiments.
\begin{table}[H]
\caption{\label{tab:eval_toy} Quantitative Results of GAIL, GAIfO and our method in Gridworld Environment. }
\vspace{1mm}
\begin{center}
\begin{tabular}{ccccc}
\toprule 
 Num. of Action  & 1 & 2 & 4 & 11  \\
\midrule
GAIL~\cite{GAIL}   & 86.0$\pm$3.0 & 70.4$\pm$6.4 & 68.7$\pm$5.8  & 69$\pm$4.0   \\
GAIfO~\cite{gaifo}   &86.8$\pm$1.3  & 55.7$\pm$11.9  & 48.3$\pm$9.3  & 28.3$\pm$6.2  \\
Ours    &87.3$\pm$1.8  & 65.0$\pm$3.3  &56.0$\pm$5.0  &49.0$\pm$8.6   \\
\bottomrule 
\end{tabular}
\end{center}
\end{table}
\vskip -0.3 in
\subsection{Comparative Evaluations}

\paragraph{On the differences on results compared with~\cite{gaifo}} For the baseline results of GAIfO~\cite{gaifo}, we notice that there are some differences between the results reported in~\cite{gaifo} and our paper (Tab. 2 and Fig. 2 in the main paper). We hypothesis that the reason is twofold. First, \textbf{different physics engine}. Referring to the footnote 2 in page 5 of ~\cite{gailfo_new}, the experiments in~\cite{gaifo} are conducted with PyBullet~\cite{pybullet} physics engine, while we use MuJoCo~\cite{mujoco} instead since it is the default physics engine in OpenAI Gym~\cite{gym} benchmark. Second, \textbf{different expert demonstrations}. As~\cite{gaifo} does not provide the expert demonstrations used for imitation learning, we collect the demonstrations for all the baselines and our method by training an expert with PPO~\cite{PPO}, which may lead to different imitation learning results.

\subsection{Quantitative Results of Ablation Study}\label{sec:ablations}

\autoref{tab:abla_p} and \autoref{tab:abla_s} list the quantitative results of the ablations analysis (sensitivity to policy entropy and mutual information), while the corresponding learning curves can be found in \autoref{fig:result_p} and \autoref{fig:result_s} respectively.

\begin{table}[H]
\caption{\label{tab:abla_p} Quantitative results about $\lambda_p$ on \textit{HalfCheetah} task. }
\vspace{1mm}
\begin{center}
\begin{tabular}{lc}
\toprule
hyperparameters & Averaged return \\
\midrule
$\lambda_p=0.0, \lambda_s=0.01$     &   $4882.8\pm40.1$ \\
$\lambda_p=0.0005, \lambda_s=0.01$  &   $5526.2\pm95.6$ \\
$\lambda_p=0.001, \lambda_s=0.01$   &   $5343.2\pm88.5$ \\
$\lambda_p=0.01, \lambda_s=0.01$    &   $5404.8\pm103.7$ \\
\bottomrule 
\end{tabular}
\end{center}
\end{table}
\vskip -0.3 in
\begin{table}[H]
\caption{\label{tab:abla_s} Quantitative results about $\lambda_s$ on \textit{HalfCheetah} task. }
\vspace{1mm}
\begin{center}
\begin{tabular}{lc}
\toprule 
hyperparameters & Averaged return \\
\midrule
$\lambda_p=0.001, \lambda_s=0.0$    &   $4658.0\pm90.2$ \\
$\lambda_p=0.001, \lambda_s=0.001$  &   $5189.7\pm77.2$ \\
$\lambda_p=0.001, \lambda_s=0.01$   &   $5343.2\pm88.5$ \\
$\lambda_p=0.001, \lambda_s=0.1$    &   $5540.5\pm100.3$ \\
\bottomrule 
\end{tabular}
\end{center}
\end{table}

\vskip -0.3 in
\begin{figure}[!htb]
        \begin{subfigure}{0.5\textwidth}
        \includegraphics[width=0.8\linewidth]{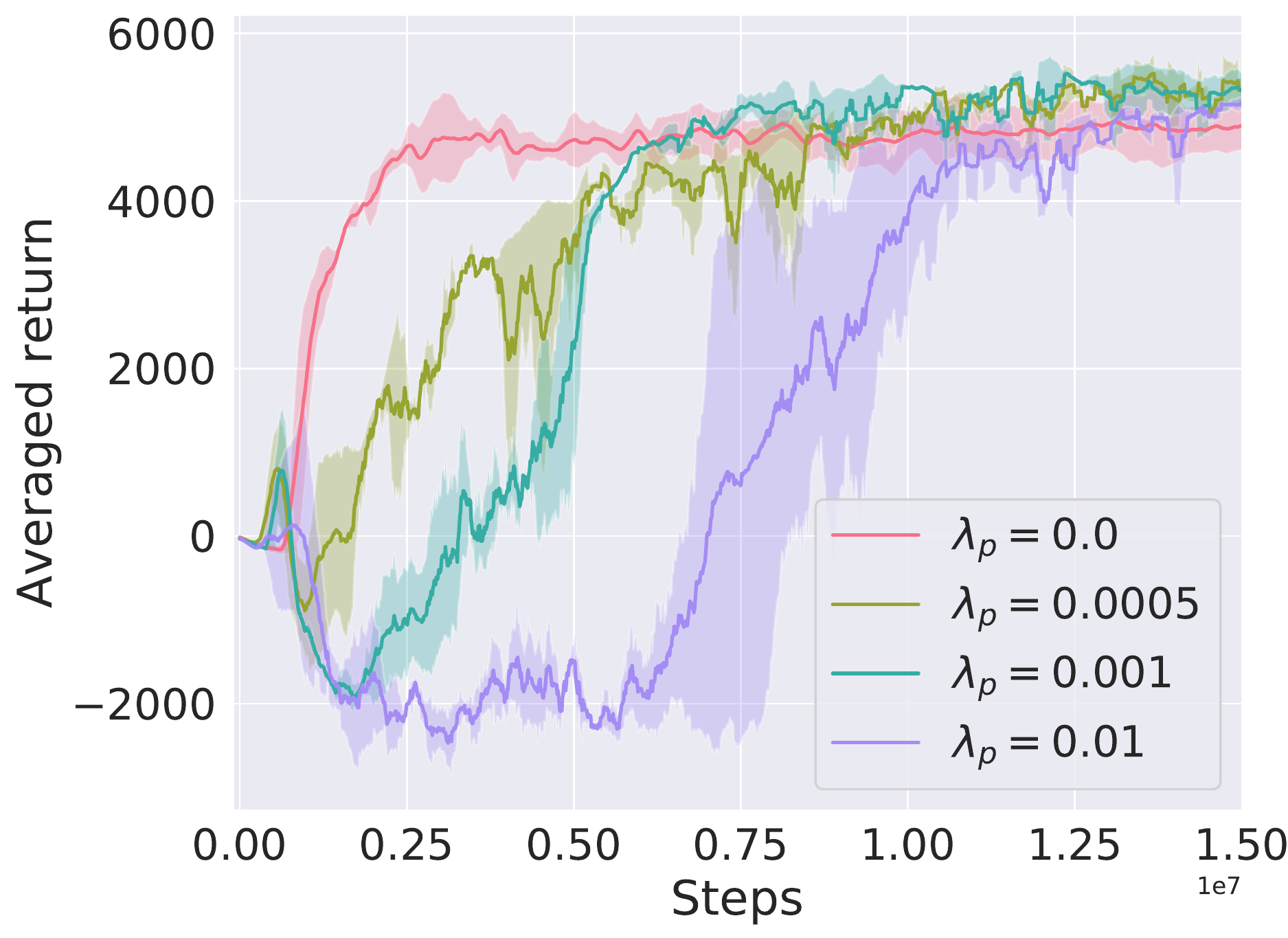}
        \vskip -0.1in
        \caption{}
        \label{fig:result_p}
    \end{subfigure}
    \begin{subfigure}{0.5\textwidth}
        \includegraphics[width=0.8\linewidth]{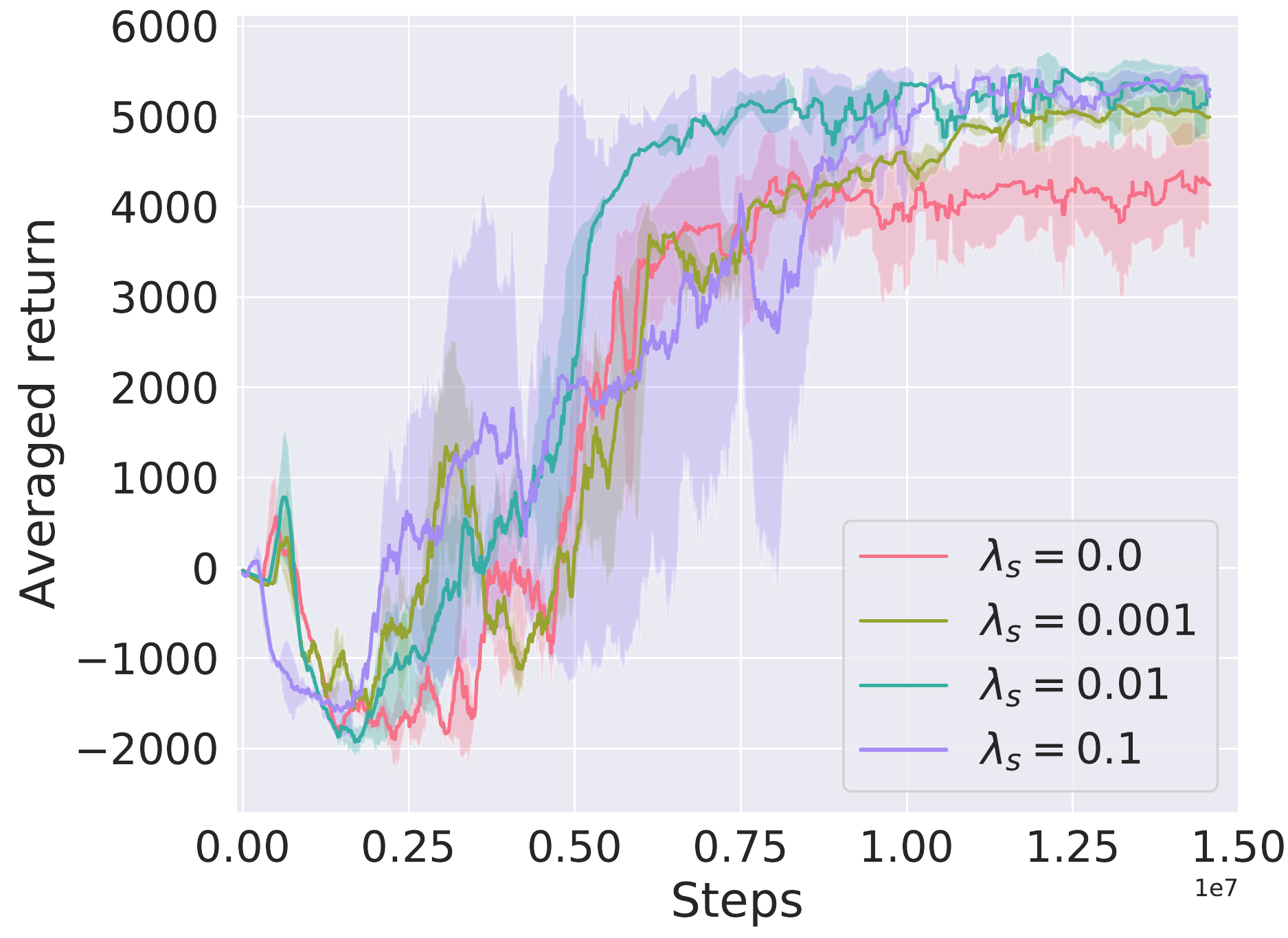}
        \vskip -0.1in
        \caption{}
        \label{fig:result_s}
    \end{subfigure}
\vskip -0.1in
\caption{\textbf{(a)} Learning curves of our method under different $\lambda_p$ settings. \textbf{(b)} Learning curves of our method under different $\lambda_s$ settings.}
\end{figure}

\begin{figure*}[h]
\begin{center}
\includegraphics[width=13.5cm]{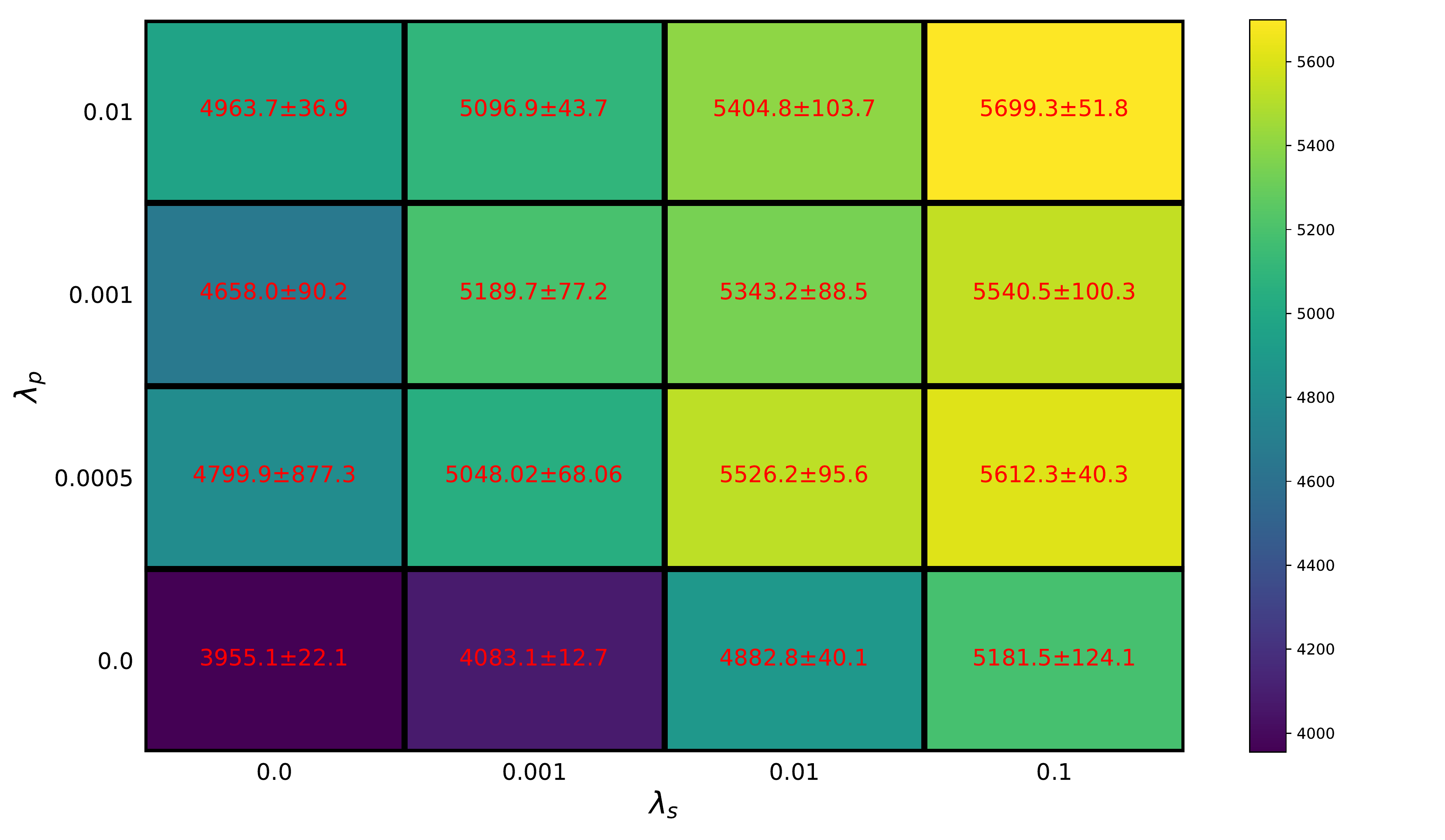}
\caption{Quantitative results of a grid search on $\lambda_s$ and $\lambda_p$.}
\label{fig:gridsearch}
\end{center}
\end{figure*}
To further illustrate how our method can benefit from the two components (policy entropy and MI terms), here we also provide the results of performing a grid search on $\lambda_p$ and $\lambda_s$ in \autoref{fig:gridsearch}. All the numerical results are evaluated under the same criteria as other experiments. 

The results read that, adding MI term can always promote the imitation performances, and the improvement can be more significant as the value of $\lambda_s$ increases. And the promotions it obtains are robust to the changes of $\lambda_p$. On the other hand, imitation performance can also benefit from adding policy entropy, while different $\lambda_p$ may lead to different improvements over the GAIfO baseline (the left-bottom block, with $\lambda_s = \lambda_p = 0$).

\end{document}